\DeclareMathOperator*{\argmax}{arg\,max}
\newtheorem{theorem}{Theorem}
\newtheorem{remark}{Remark}
\newtheorem{proposition}{Proposition}
\newtheorem{lemma}{Lemma}
\newtheorem{corollary}{Corollary}
\title{Pure exploration in multi-armed bandits with low rank structure using oblivious sampler}
\author{ \href{https://orcid.org/0000-0001-8588-4543}{\includegraphics[scale=0.06]{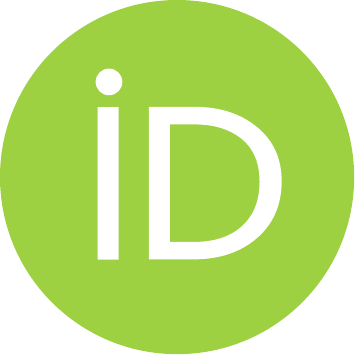}\hspace{1mm}Yaxiong Liu}
  \\
	Graduate School of Information Science and Technology\\
	Hokkaido University\\
	Sapporo, Japan \\
	\texttt{yaxiong.liu@ist.hokudai.ac.jp} \\
	\And
	Atsuyoshi Nakamura\\
	Graduate School of Information Scienece and Technology\\
	Hokkaido University\\
	Sapporo, Japan\\
	\texttt{atsu@ist.hokudai.ac.jp} \\
	 \AND
  \href{https://orcid.org/0000-0002-1536-1269}{\includegraphics[scale=0.06]{orcid.pdf}\hspace{1mm}Kohei Hatano} \\
	Department of Informatics\\
	Kyushu University\\
	Fukuoka, Japan\\
	\texttt{hatano@inf.kyushu-u.ac.jp} \\
 \AND
 \href{https://orcid.org/0000-0001-9542-2553}{\includegraphics[scale=0.06]{orcid.pdf}\hspace{1mm}Eiji Takimoto} \\
	Department of Informatics\\
	Kyushu University\\
	Fukuoka, Japan\\
	\texttt{eiji@inf.kyushu-u.ac.jp} \\
}
\begin{document}
\maketitle
\begin{abstract}
In this paper, we consider the low rank structure of the reward sequence of the pure exploration problems. Firstly, we propose the separated setting in pure exploration problem, where the exploration strategy can not receive the feedback of its explorations.  Due to this separation, it requires that the exploration strategy to sample the arms obliviously. By involving the kernel information of the reward vectors, we provide efficient algorithms for both time-varying and fixed cases with regret bound $O(d\sqrt{(\ln N)/n})$. Then, we show the lower bound to the pure exploration in multi-armed bandits  with low rank sequence. There is an $O(\sqrt{\ln N})$ gap between our upper bound and the lower bound.
\end{abstract}

\keywords{Pure exploration\and Multi-armed bandits \and low rank structure\and Oblivious sampling}

 \section{Introduction}
Pure exploration in multi-armed bandits is first introduced by  \cite{bubeck2009pure} and has been further extensively researched in the academic community and applied in the industry. This problem is based on the stochastic multi-armed bandits and assumes that exploration is in the total process and recommends only the last round $n$. This problem is described as follows: On round $t \in [n]:$ the exploration strategy explores some arm $I_{t} \in [N]$ and then receives the feedback $\bm{l}_{t}(I_{t})$ from the environment with some underlying distribution. Then the recommendation strategy gives a recommendation $J_{t}$. Note that this exploration process ends when the environment determines. The goal of the algorithm is to guarantee that the recommendation is optimised with respect to the \emph{simple regret} maximised.

As we mentioned, the performance of the algorithm is highly-related to the exploration strategy, sampling the arms, and recommendation strategy, selecting the arm. In this paper, we consider the oblivious sampling with a structured reward sequence. 

To the best knowledge of us, all the algorithms for pure exploration assume that the exploration strategy samples the arms adaptively. It implies that the exploration strategy can receive the feedback of its selection $\bm{l}_{t}(I_{t})$ and update the sampling by rounds. However, in our mentioned oblivious sampler, we assume that the feedback can be only received by the recommendation strategy. It means that the exploration strategy is fixed in advance of the learning process, while the recommendation strategy can only give recommendations from the oblivious sample determined by the exploration strategy. 

Meanwhile, we involve the low rank structure of the reward sequence in our work. The low rank structure reward has been first introduced by \cite{hazan2016online} in the online learning framework. Specifically, the $N$ dimensional reward vector $\bm{l}_{t}$ is dependent on $d$ factors ($d$-dimensional seed vector) according to an $N \times d$ kernel matrix under the low rank assumption. 
In the bandit problem, this setting is similar to linear bandits \citep{cesa2012combinatorial} and linear contextual bandits \citep{chu2011contextual}, when the kernel is fixed or stochastic, respectively.
Therefore, there is a straightforward algorithm for pure exploration setting based on linear contextual bandits algorithm when the exploration part can deploy the adaptive sampler and the running round $n$ is given to the exploration strategy in advance \citep{bubeck2009pure}, when the kernel is known to the algorithm. Otherwise, without the prior knowledge of $n$, there is an extra $\ln n$ factor incurred by doubling-trick \citep{besson2018doubling, zimmert2019optimal}. 

Using the oblivious sampler for pure exploration problem can be widely utilised in practice scenarios.
Concretely, in crowdsourcing \citep{ghezzi2018crowdsourcing}, one user will deliver the tasks to the workers with a platform. In this scenario, this user will provide the task $t$ on the platform. Then the platform will select one of the $N$ workers to accomplish this task. Next, this user receives the feedback from the work $I_{t} \in [N]$ and decides which worker is the most appropriate for his tasks according to the historical record. Finally, the process will end, when some worker leaves the platform. In this case, the platform plays the role of the exploration part, and the user plays the recommendation part. For some privacy reasons, the oblivious sampler should be deployed, while the reward is confidential to the platform. Simultaneously, on the low rank setting aspect, the rewards of these $N$ workers are depending on the ability of the workers and the task $t$ itself. Mathematically, we can model the ability of the workers with an $N \times d$ matrix with respect to the seed vector (about the task). We assume that the user can not parameterise the task into a vector, while the platform can parameterise the task but does not know the content due to privacy reasons. More generally, the kernel matrix, representing the ability of the workers,  can be fixed or stochastic. 

Due to our assumption, the UCB or Thompson sampling can not be utilised, since the exploration part can not receive the feedback of its selections. 
A core problem in algorithm designing in this paper is to propose an oblivious sampler for the exploration strategy, which can provide necessary information to the recommendation strategy with respect to the low rank structure. 
Therefore, we propose firstly an efficient algorithm for pure exploration in multi-armed bandits with stochastic kernels.  In this algorithm, the exploration strategy samples the arms uniformly in each round. From the feedback of the exploration arm, the recommendation strategy constructs an unbiased estimator to the seed vector and recommends the estimated best arm with the averaged estimated seed vectors and the kernel matrices. By some restrictions to the kernel matrix and seed vector, our proposed algorithm achieves the regret bound as $O \left( d\sqrt{\frac{\ln N}{n}}\right)$.

Next, we consider the fixed kernel case by releasing the range restriction on the seed vector and the kernel matrix. By involving the Bary-centric spanner \citep{awerbuch2004adaptive}, the exploration strategy focuses on exploring $d$ ``important arms'' from the total $N$ fixed arms during the learning process one by one on each round. Then the recommendation strategy can estimate the seed vector according to the feedback and the corresponding arms. In this designed algorithm, the regret bound is as same as the stochastic case.

Finally, we show a lower bound to the pure exploration problem with low rank structure as $\Omega\left( \frac{d}{\sqrt{n}}    \right)$. There is still a $\sqrt{\ln N}$ gap for the separated setting. 

 \textbf{Main contributions:} In this paper, we focus on the pure exploration problem in multi-armed bandits utilising oblivious sampler.  Concretely, we can summarise as follows:
\begin{itemize}
      \item We consider the low rank reward sequence in pure exploration problem. We propose the algorithms for fixed and stochastic kernels respectively. For the latter case, our algorithm guarantees an upper bound as $O\left( \sqrt{\frac{d^{2} \ln N}{n}}\right)$ by sampling the arms uniformly. For the former case, our proposed algorithm achieves the same regret bound with less restriction on the range of seed vectors and kernel matrix. In this algorithm, we need only obtain the Bary-centric spanner at the beginning of the algorithm and sample the exploration according to these ``important" arms on each round. 
    \item We demonstrate a lower bound $\Omega\left( \sqrt{\frac{d \ln N}{n}}\right)$ for pure exploration problem with low rank structure. It implies that our proposed upper bound is near-optimal.  
\end{itemize}
Here we conclude our contribution in the following chart:

\begin{table}[H]
\caption{Comparison with our work and previous works}
\centering
\begin{tabular}{|l|c|l|}
\hline
  & Fixed kernel& Stochastic kernel\\
\hline
Oblivious sampler &  $O\left(\frac{d\sqrt{\ln N}}{\sqrt{n}}\right)$  $\Omega\left( \sqrt{\frac{d \ln N}{n}}\right)$  & $O\left(\sqrt{\frac{d^{2} \ln N}{n}}\right)$   $\Omega\left( \sqrt{\frac{d \ln N}{n}}\right)$   \\
\hline
Adaptive sampler & \makecell[c]{$\Theta\left( \sqrt{\frac{d \ln N}{n}}\right)$ \\ \cite{bubeck2009pure,bubeck2012towards}}  & \makecell[c]{$O\left( \sqrt{\frac{d^{2}\ln n}{n}}\right)$ \\ \cite{bubeck2009pure,chu2011contextual}}\\
\hline
\end{tabular}
\end{table}


\textbf{Related Work:}
The setting of pure exploration with multi-armed bandits is similar but different from the best arm identification problems \citep{jamieson2014best,madani2004budgeted}. In the best arm identification problem, the budget of exploration $n$ is one of the prior information to the algorithm, while in the pure exploration problem, the stopping signal is given by the environment arbitrarily. 

Low rank sequence in prediction with expert advice can be seen as a full-information setting for multi-armed bandits with low rank structure with the fixed kernel. In a low rank setting, the $N$-dimensional reward vectors are actually from a $d$-dimensional kernel space.
A central work in the low rank setting is to explore the $d$-dimensional seed vector during the learning process. With the prior information about the kernel space, the regret bound to Expert advice with low rank structure is $\Theta(\sqrt{dn})$, which is not related to the dimensional of the reward vector.

Our low rank setting in multi-armed bandits can be seen as linear bandits or linear contextual bandits with respect to the fixed or stochastic kernel respectively. In the former case, 
\cite{bubeck2012regret,dani2007price} provide an upper bound  
$O\left( \sqrt{d n \ln N}\right)$, and lower bound  $\Omega\left( d \sqrt{n}\right)$ to regret, respectively. 
In the latter case, the upper bound is $O\left(  \sqrt{\frac{d^{2} \ln N}{n}}\right)$ \citep{bubeck2012regret}. Again, we need to emphasis, we can not directly utilise the existing algorithm for linear 
 (contextual) bandits, due to the oblivious sampling setting, which is not concerned with traditional pure exploration research \citep{bubeck2009pure}.

\section{Preliminaries}
First of all, we define some notations used in this paper.
We let $[n]$ denote $\lbrace 1,\cdots, n \rbrace$ for a positive integer $n$.
A vector is denoted by a bold letter and the $i$-th component of a vector $\bm{a}$ is denoted by $\bm{a}(i)$.
A matrix and its $i$-th row is denoted by a capital letter and its corresponding small letter subscripted with $i$, respectively.
For example, the $i$-th row of matrix $U$ is denoted by $\bm{u}_{i}.$

For a positive integer $N$, we define $\Delta(N)$ as
\begin{equation}
\Delta(N)=\left\{  \bm{x}: \sum_{i=1}^{N} \bm{x}(i)=1   \quad \land \quad \bm{x}(i) \geq 0 \quad \forall i\in [N]          \right\}.
\end{equation}

\subsection{Pure exploration in multi-armed bandits}
In this sub-section, let us have a review of the problem setting of ``pure exploration'' in $N$-armed bandits problem proposed by \citet{bubeck2009pure}.

For each round $t=1,2,3,\cdots:$
\begin{enumerate}
\item Algorithm chooses $\varphi_{t} \in \Delta(N)$ and pulls $I_{t} \in [N]$ according to the distribution $\varphi_{t}$.
\item Environment returns reward $\bm{l}_{t}(I_{t})$, where $\bm{l}_{t} \in [-1,1]^{N}$.
\item If the environment sends a stopping signal then 
  \begin{description}
\item[\ \ \ ]  stopping time $n$ is set to the current time $t$ and
\item[\ \ \ ]  the algorithm outputs $J_n\in [N]$ and the game is over,
\end{description}
  otherwise the game continues.
\end{enumerate}
In this paper, we consider the stochastic bandits problem, where we assume that 
the reward sequence $\bm{x}_1,\bm{x}_2,\cdots$ is an i.i.d random variable sequence.
Moreover, we let $\boldsymbol{\mu}$ denote that $\mathbb{E}\left[ \bm{x}_{t}\right]$ for all $t\in \mathbb{N}_{+}.$

The goal of the algorithm is to minimise the simple regret $r_n$, which is defined as follows:
\begin{equation}
r_n=\max_{i \in [N]}\boldsymbol{\mu}(i)-\boldsymbol{\mu}(J_n),
\end{equation}
where $n$ is the stopping time that is decided by the environment.

An algorithm for this problem can be determined by designing a sampling (or allocation) strategy and a recommendation strategy.
A sampling strategy decides how to choose $\varphi_{t}$ at $t=1,2,\cdots$, and a recommendation strategy decides how to choose $J$.
Two types of sampling strategies are considered.
One is \emph{oblivious} and the other is \emph{adaptive}.
An oblivious sampler chooses $\varphi_{t}$ independently of history $I_1,\bm{l}_1(I_1),\cdots,I_{t-1},\bm{l}_{t-1}(I_{t-1})$
while an adaptive sampler chooses $\varphi_{t}$ depending on it.
In this paper, we only consider oblivious samplers as sampling strategies.

\citet{bubeck2009pure} proposed a uniform allocation strategy 
and Empirical Best Arm (EBA) recommendation strategy. 
It means that for each round $t$, the algorithm will select the arm uniformly from $1$ to $N$ for $\varphi_{t}$ (on round $t$ selecting arm $t \mod N$) and outputs
\begin{equation}
J_n=\argmax_{j \in [N]}\frac{1}{|T_{j}(n)|}\sum_{s \in T_{j}(n)}\bm{l}_{s}(j)
\end{equation}
at the stopping time $n$, where $T_{j}(t)=\left\{  s: I_{s}=j,  \forall s \in [t]   \right\}$.

\begin{theorem}[\cite{bubeck2009pure}]
  If we run the algorithm of pure exploration in $N$-armed bandits problem using uniformly allocation and EBA recommendation strategies for $n$ rounds, the simple regret $r_n$ is bounded as follows:
\begin{equation*}
r_n \leq O\left(  \sqrt{\frac{ N \ln N }{ n}} \right).
\end{equation*}
\end{theorem}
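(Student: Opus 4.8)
The plan is to exploit that the cyclic uniform allocation is \emph{oblivious}: for a fixed horizon $n$ each arm $j$ is pulled a deterministic number of times $|T_{j}(n)| \ge \lfloor n/N\rfloor$, and the feedbacks $\{\bm{l}_{s}(j): s\in T_{j}(n)\}$ are therefore i.i.d.\ draws from the reward distribution of arm $j$. This genuine independence is exactly what licenses a clean Hoeffding-type concentration; with an adaptive sampler the index set $T_{j}(n)$ would itself depend on the observed rewards and the argument would be considerably more delicate.

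The first step is to reduce the (random) simple regret to a uniform deviation. Write $\hat{\mu}_{j}=\frac{1}{|T_{j}(n)|}\sum_{s\in T_{j}(n)}\bm{l}_{s}(j)$, let $i^{*}=\argmax_{i}\bm{\mu}(i)$, and set $D_{n}=\max_{j\in[N]}|\hat{\mu}_{j}-\bm{\mu}(j)|$. Since $J_{n}$ maximises $\hat{\mu}_{j}$, a three-term triangle-inequality chain,
\begin{equation*}
\bm{\mu}(i^{*})\le \hat{\mu}_{i^{*}}+D_{n}\le \hat{\mu}_{J_{n}}+D_{n}\le \bm{\mu}(J_{n})+2D_{n},
\end{equation*}
yields the deterministic bound $r_{n}\le 2D_{n}$. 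This decouples the recommendation rule from the concentration and is the conceptual core of the Empirical-Best-Arm analysis.

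It then remains to control $\mathbb{E}[D_{n}]$. Because the rewards lie in $[-1,1]$ and the samples of each arm are i.i.d., Hoeffding's inequality gives $\Pr[|\hat{\mu}_{j}-\bm{\mu}(j)|>\epsilon]\le 2\exp(-|T_{j}(n)|\epsilon^{2}/2)\le 2\exp(-\lfloor n/N\rfloor\,\epsilon^{2}/2)$ for each $j$, and a union bound over the $N$ arms bounds $\Pr[D_{n}>\epsilon]$. To pass to the expectation I would integrate the tail, $\mathbb{E}[D_{n}]=\int_{0}^{\infty}\Pr[D_{n}>\epsilon]\,d\epsilon$, splitting at the threshold $\epsilon_{0}=\sqrt{2\ln(2N)/\lfloor n/N\rfloor}$ where the union bound becomes vacuous; the sub-Gaussian tail makes the residual integral $O(\epsilon_{0})$. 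Using $\lfloor n/N\rfloor\ge n/(2N)$ then gives $\mathbb{E}[r_{n}]\le 2\,\mathbb{E}[D_{n}]=O(\sqrt{(N\ln N)/n})$, as claimed.

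The one genuine subtlety I anticipate concerns the stopping time $n$. The statement should be read with $n$ as a fixed horizon, so that the deterministic allocation makes $T_{j}(n)$ and hence the i.i.d.\ structure explicit for that value of $n$. If instead $n$ were an environment-chosen stopping time that could depend on the observed rewards, the single-threshold union bound above would not suffice and one would need a concentration uniform over time (at the cost of extra logarithmic factors); I would flag this but carry out the argument under the fixed-horizon reading consistent with the ``for $n$ rounds'' phrasing.
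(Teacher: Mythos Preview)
Your argument is correct. The paper does not give a standalone proof of this cited theorem, but it recovers the bound as the $d=N$ specialisation of Theorem~\ref{thm:regret-bound-fixed-kernel} (worked out in the supplementary details for Remark~\ref{remark:recover-original}), and that argument proceeds differently from yours. You reduce to the uniform deviation $r_n\le 2D_n$ and integrate the Hoeffding/union-bound tail of $D_n$. The paper instead expands $\mathbb{E}[r_n]=\sum_{j}\Delta_j\,\mathbb{P}[J_n=j]$, bounds each $\mathbb{P}[J_n=j]$ by the pairwise event $\mathbb{P}[\hat\mu_j\ge\hat\mu_{j^{\star}}]$ via McDiarmid applied to the difference $\hat\mu_j-\hat\mu_{j^{\star}}$, splits the sum at a threshold $\epsilon$, and uses that $x\mapsto x\exp(-Cx^{2})$ is decreasing on $[1/\sqrt{2C},1]$ to control the large-gap arms before optimising $\epsilon$. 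Your route is shorter and yields a deterministic pointwise inequality $r_n\le 2D_n$ along the way; the paper's gap-based decomposition is the template it reuses verbatim for the low-rank results (Theorems~\ref{thm:regret-bound-time-varying} and~\ref{thm:regret-bound-fixed-kernel}), which is why it is natural there, but both arguments collapse to the same $O(\sqrt{N\ln N/n})$. Your caveat about the stopping time is well placed; the paper's own proofs likewise treat $n$ as a fixed horizon.
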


\subsection{Low rank structure}

We say that the reward sequence $\bm{x}_1,\bm{x}_2,\cdots$ is $d$-rank reward sequence if and only if
\[
\mathrm{rank}([\bm{x}_{1},\cdots, \bm{x}_{T}])\le d \text{ for all } T\in \mathbb{N}_+,
\]
where $[\bm{x}_{1},\cdots, \bm{x}_{T}]$ is a matrix whose $t$-th column is   $\bm{x}_{t}$.
A $d$-rank reward sequence is said to have the low rank structure when $d \ll N$.
For a $d$-rank reward sequence $\bm{x}_1,\bm{x}_2,\cdots$, there are a kernel matrix $U \in \mathbb{R}^{ N \times d}$
and $d$-dimensional vector sequence $\bm{v}_1,\bm{v}_2,\cdots$ such that $\bm{x}_t=U\bm{v}_t$ for all $t\in \mathbb{N}_+$.
We call $\bm{v}_{t}$ the seed vector of the reward vector $\bm{x}_{t}$.
In our stochastic setting, seed vector sequence $\bm{v}_1,\bm{v}_2,\cdots$ is an i.i.d. random variable sequence 
that satisfies $\bm{v}_t\sim\mathcal{D}$, $U\bm{v}_t\in [-1,1]^N$ and $\mathbb{E}[\bm{v}_t]=\bm{v}$ for some distribution $\mathcal{D}$, some $d$-dimensional vector $\bm{v}$  and  all $t\in \mathbb{N}_+$.

Furthermore, we extend the low-rank structure of the reward sequence from the fixed kernel $U$ to the random kernel sequence $U_1,U_2,\cdots$.
A reward sequence $\bm{x}_1,\bm{x}_2,\cdots$ is said to have the low rank structure for the i.i.d. random kernel sequence $U_1,U_2,\cdots\in \mathbb{R}^{ N \times d}$ with $d \ll N$
if there are seed vector sequence $\bm{v}_1,\bm{v}_2,\cdots$ and a matrix $U \in \mathbb{R}^{ N \times d}$ such that
$\bm{x}_t=U_t\bm{v}_t$ for all $t\in \mathbb{N}_+$ and $\mathbb{E}[U_t]=U$.
We let $\bm{u}_{t,i}$ denote the $i$-th row of $U_{t}$, thus $\mathbb{E}[\bm{u}_{t,i}]=\bm{u}_{i}$ holds. 
The seed vector sequence $\bm{v}_1,\bm{v}_2,\cdots$ is also assumed to be an i.i.d. random variable sequence 
that satisfies $\bm{v}_t\sim\mathcal{D}$, $U_t\bm{v}_t\in [-1,1]^N$ and $\mathbb{E}[\bm{v}_t]=\bm{v}$ for some distribution $\mathcal{D}$, some $d$-dimensional vector $\bm{v}$  and  all $t\in \mathbb{N}_+$.
Further we assume that the kernel matrices and the seed vectors are independent, thus $\mathbb{E}[\bm{x}_{t}]=\mathbb{E}[U_{t}] \cdot \mathbb{E}[\bm{v}_{t}]=U\bm{v}\in [-1,1]^N$.
We also assume that all rows of the kernel matrices $\bm{u}_{t,i}$ and $\bm{u}_i$ belong to a domain $\mathcal{U}$.
Note that the random kernel setting includes the fixed kernel setting.

Finally, we assume that the kernel information $U_{t}$ is given to the algorithm 
in advance on each round $t \in [T]$.

\section{Main Result}
In this section, we demonstrate the main results of our paper. In the first sub-section, we give an algorithm for the stochastic kernel with oblivious sampler. In the second sub-section, we design an algorithm for the fixed kernel (low rank reward) case. We need emphasis that we constrain the stochastic kernel and seed vector to obtain the desirable simple regret bound. Finally, we show the lower bound of the pure exploration problem which indicates that there is still a $O(\sqrt{\ln N})$ gap between the current bounds.

\subsection{Low rank reward with stochastic kernel}
In this sub-section, we consider the pure exploration problem in multi-armed bandits with the stochastic kernel. One of the central problems in the separated setting is that the exploration strategy is only depended on the kernel information instead of the feedback of the pulled arm in ordinary setting. Once the exploration strategy receives the kernel information $U_{t}$, the exploration strategy is requested to output an oblivious  distribution to sample the pulled arm $I_{t}$.  A straightforward idea is to explore the arms uniformly, if there is no specified structure upon the reward sequence. With the assistance of the kernel information, in this part, we proposed an efficient sampler by involving the John distribution over all arms \citep{grotschel2012geometric,cesa2012combinatorial}.


Again, our goal is to minimise the simple regret defined as follows:
\begin{equation}\label{Eq:time-varing-simple-regret}
r_{n}=\max_{i \in [N]} \bm{u}_{i} \cdot \bm{v}- \bm{u}_{J_{n}} \cdot \bm{v}.
\end{equation}

In principle, we can recommend the arm more effectively, if the recommendation part can estimate the seed vector $\bm{v}_{t}$. Instead of recommending from $N$-arms, the recommendation strategy performs well, if it is confronted to a lower dimensional space, a $d$-dimensional action space. Firstly, we propose the estimator of the seed vector $\bm{v}_{t}$ as follows:
\begin{equation}
\hat{\bm{v}}_{t}=A_{t}^{-1}\bm{u}_{t,I_{t}}^{\top}\bm{x}_{t}(I_{t}),
\end{equation}
where $A_{t}=\left[  \sum_{i=1}^{N} \bm{p}_{t}(i) \bm{u}_{t,i}^{\top} \bm{u}_{t,i}  \right]=\mathbb{E}_{I_{t} \sim \bm{p}_{t}} \left[ \bm{u}_{t,I_{t}}^{\top} \bm{u}_{t,I_{t}} \right]$ for some exploration distribution $\bm{p}_{t}\in \Delta(N).$ Meanwhile, we see that $\hat{v}_{t}$ is unbiased, since
\begin{equation}
    \mathbb{E}_{I_{t} \sim \bm{p}_{t}}[\bm{\hat{v}}_{t}]=\mathbb{E}_{I_{t} \sim \bm{p}_{t}}[A_{t}^{-1}\bm{u}_{t,I_{t}}^{\top}\bm{x}_{t}(I_{t})]=\mathbb{E}_{I_{t} \sim \bm{p}_{t}}[A_{t}^{-1}\bm{u}_{t,I_{t}}^{\top}\bm{u}_{t,I_{t}}\bm{v}_t]=\bm{v}_{t}.
\end{equation}
Since $\bm{v}_{t}$ is a random variable with respect to some underlying distribution, we have that $\mathbb{E}\left[ \mathbb{E}_{I_{t} \sim \bm{p}_{t}}[\hat{\bm{v}}_{t}]\right]=\bm{v}$.

Thus, the recommendation strategy selects the arm $J_{n}$ by the best estimate arm such that
\begin{equation} \label{algorithm:recommendation}
  J_{n}=\argmax_{i\in [N]} \left(\left(   \frac{1}{n}\sum_{s=1}^{n}U_{s} \right)\left(\frac{1}{n}\sum_{s=1}^{n}\hat{\bm{v}}_{s}\right)\right)(i).
\end{equation}

Then we can demonstrate the algorithm for this low rank loss with stochastic kernel as follows:
\begin{algorithm}
\caption{Algorithm for pure exploration in stochastic kernel bandits}
\label{alg:Algorithm-for-unfixed-kernel}
\begin{algorithmic}
\REQUIRE A uniformly distribution $\bm{p}_{t}=\left( \frac{1}{N}, \cdots, \frac{1}{N},\cdots, \frac{1}{N} \right)$

\FOR {$t=1,2,\cdots$}
\STATE Receive $U_{t}$ the kernel of round $t$ and sample the arm $I_{t}$ according to distribution $\bm{p}_{t}$.
\STATE The recommendation receives the reward $\bm{x}_{t}$ and calculate
\begin{equation}
\hat{\bm{v}}_{t}=A_{t}^{-1}\bm{u}_{t, I_{t}}^{\top}\bm{x}_{t}(I_{t}),
\end{equation}
where $A_{t}=\mathbb{E}_{\bm{p}_{t}}[(\bm{u}_{t,i})^{\top} (\bm{u}_{t,i})]$ .
\IF{the environment sends a stopping signal}
\STATE Set $n$ to $t$.
\STATE Recommend $J_{n}=\max_{j \in [N]} \left(\frac{1}{n}\sum_{s=1}^{n}\bm{u}_{s,j}\right)\left(\frac{1}{n}\sum_{s=1}^{n}\hat{\bm{v}}_{s} \right)$ and stop.
\ENDIF
\ENDFOR
\end{algorithmic}
\end{algorithm}

Note that the simple regret is based on the expectation of the kernel matrix $U=\mathbb{E}[U_{t}],$ although it emerges not during the learning process.  Before we state our main result, let us introduce McDiarmid's inequality firstly.

\begin{theorem}
Let $f: \mathcal{X}_{1} \times \cdots \times \mathcal{X}_{n} \rightarrow \mathbb{R}$ satisfy the bounded difference property with bounds $c_{1},\cdots, c_{n}:$
$$
\forall i \in [n] \sup_{x_{i}^{'} \in \mathcal{X}_{i}}\left| f(x_{1},\cdots, x_{i},\cdots, x_{n})-f(x_{1},\cdots, x_{i}^{'},\cdots, x_{n}) \right| \leq c_{i}.
$$
Consider independent random variable $X_{1},\cdots, X_{n}$ where $X_{i} \in \mathcal{X}_{i}, \forall i.$ Then for all $\epsilon >0$,
\begin{equation}
\mathbb{P}\left[   f(X_{1},\cdots , X_{n})-\mathbb{E}\left[ f(X_{1},\cdots, X_{n})\right] \geq \epsilon         \right] \leq \exp \left(   -\frac{2 \epsilon^{2}}{\sum_{i=1}^{n} c_{i}^{2}}     \right)
\end{equation}
\end{theorem}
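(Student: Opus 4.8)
The plan is to prove this bounded-differences inequality by the classical martingale method, i.e.\ by applying the Chernoff bound to the Doob martingale generated by $f$. First I would fix the independent variables $X_1,\dots,X_n$ and define
\[
Z_i = \mathbb{E}\left[ f(X_1,\dots,X_n)\mid X_1,\dots,X_i \right], \qquad i=0,1,\dots,n,
\]
so that $Z_0=\mathbb{E}[f(X_1,\dots,X_n)]$ and $Z_n=f(X_1,\dots,X_n)$. Setting $D_i=Z_i-Z_{i-1}$, the deviation we want to bound telescopes as $f-\mathbb{E}[f]=\sum_{i=1}^n D_i$, and by the tower property $\mathbb{E}[D_i\mid X_1,\dots,X_{i-1}]=0$, so $(Z_i)$ is a martingale and the $D_i$ are martingale differences.

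The crucial step is to establish that each increment has conditionally bounded range. Because $X_1,\dots,X_n$ are independent, conditioning on $X_1,\dots,X_{i-1}$ and integrating out $X_{i+1},\dots,X_n$ preserves the pointwise bound from the bounded-difference hypothesis, so that for any two values $x,x'$ of the $i$-th coordinate,
\[
\left| \mathbb{E}\left[f\mid X_1,\dots,X_{i-1},X_i=x\right] - \mathbb{E}\left[f\mid X_1,\dots,X_{i-1},X_i=x'\right] \right| \le c_i.
\]
Consequently, conditionally on the past, $D_i$ takes values in an interval of width at most $c_i$. Applying Hoeffding's lemma to this conditionally bounded, conditionally mean-zero variable yields the conditional moment generating function bound $\mathbb{E}\left[e^{\lambda D_i}\mid X_1,\dots,X_{i-1}\right]\le \exp\left(\lambda^2 c_i^2/8\right)$ for every $\lambda>0$.

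Next I would chain these bounds. Peeling off the last factor and using the tower property repeatedly gives
\[
\mathbb{E}\left[ e^{\lambda(f-\mathbb{E}[f])}\right] = \mathbb{E}\left[ e^{\lambda\sum_{i=1}^{n-1}D_i}\,\mathbb{E}\left[e^{\lambda D_n}\mid X_1,\dots,X_{n-1}\right]\right] \le \cdots \le \exp\left(\frac{\lambda^2}{8}\sum_{i=1}^n c_i^2\right).
\]
Finally I would apply Markov's inequality to obtain $\mathbb{P}[f-\mathbb{E}[f]\ge\epsilon]\le e^{-\lambda\epsilon}\exp\left(\lambda^2\sum_i c_i^2/8\right)$ and optimise over $\lambda$, choosing $\lambda=4\epsilon/\sum_i c_i^2$, which produces the stated bound $\exp\left(-2\epsilon^2/\sum_{i=1}^n c_i^2\right)$. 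The main obstacle I expect is the conditional range estimate for $D_i$: making the independence argument that turns the coordinatewise difference bound on $f$ into a genuine almost-sure width bound on the conditional increment, after which Hoeffding's lemma and the Chernoff optimisation are routine.
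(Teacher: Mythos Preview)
Your proof is the standard, correct argument for McDiarmid's inequality via the Doob martingale, Hoeffding's lemma on each conditionally bounded increment, and the Chernoff method. There is nothing to compare against: the paper does not prove this theorem at all---it merely states it as the well-known McDiarmid inequality and then invokes it as a tool in the proofs of the subsequent regret bounds.
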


The main Theorem is given as follows: 
\begin{theorem}\label{thm:regret-bound-time-varying}
In the pure exploration multi-armed bandits problem with time-varying kernel $U_{t} \in \mathbb{R}^{N \times d}$ according to some underlying distribution. If the exploration process is run by $n$ times with some exploration distribution $\bm{p}_{t}$ and the recommendation strategy is defined as in Equation (\ref{algorithm:recommendation}). The simple regret $r_{n}$ defined in  Equation  (\ref{Eq:time-varing-simple-regret}) is bounded:
\begin{equation}
\mathbb{E}[r_{n}] \leq O \left( \alpha \beta \sqrt{\frac{ \ln N}{n}} \right),
\end{equation}
where $\alpha=\max_{t} \left\{ \sup_{\bm{a},\bm{b} \in \mathcal{U}}\bm{a}^{\top}A_{t}^{-1}\bm{b} \right\}$ and $\beta \geq \Vert \bm{u}_{t,i}\Vert \Vert \bm{v}_{t} \Vert_{\star}$ with respect to some norm $\Vert \cdot \Vert$ and its dual norm $\Vert \cdot \Vert_{\star}$ for all $t\in [T] $ and $i\in [N]$. In particular, if $\mathcal{U}$ is a finite subset of $\lbrace 0,1 \rbrace^{d}$ we have further
\begin{equation}
\mathbb{E}[r_{n}] \leq O \left( d  \max_{t}\lbrace \lambda_{\min} (A_{t}) \rbrace\sqrt{\frac{ \ln N}{n}} \right),
\end{equation}
if $\beta \leq 1$.
\end{theorem}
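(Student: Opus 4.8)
The plan is to reduce the simple regret to a uniform-over-arms concentration statement for an empirical bilinear estimate, and then control that estimate via McDiarmid's inequality. Write $\bar{U}=\frac{1}{n}\sum_{s=1}^{n}U_{s}$ with rows $\bar{\bm{u}}_{i}$, and $\bar{\bm{v}}=\frac{1}{n}\sum_{s=1}^{n}\hat{\bm{v}}_{s}$, so that $J_{n}=\argmax_{i}\bar{\bm{u}}_{i}\cdot\bar{\bm{v}}$. Let $i^{\star}=\argmax_{i}\bm{u}_{i}\cdot\bm{v}$. Using the decomposition
\[
\bm{u}_{i^{\star}}\cdot\bm{v}-\bm{u}_{J_{n}}\cdot\bm{v}=\left(\bm{u}_{i^{\star}}\cdot\bm{v}-\bar{\bm{u}}_{i^{\star}}\cdot\bar{\bm{v}}\right)+\left(\bar{\bm{u}}_{i^{\star}}\cdot\bar{\bm{v}}-\bar{\bm{u}}_{J_{n}}\cdot\bar{\bm{v}}\right)+\left(\bar{\bm{u}}_{J_{n}}\cdot\bar{\bm{v}}-\bm{u}_{J_{n}}\cdot\bm{v}\right)
\]
and discarding the middle term (nonpositive by the optimality of $J_{n}$), I obtain $r_{n}\le 2\max_{i\in[N]}\left|\bar{\bm{u}}_{i}\cdot\bar{\bm{v}}-\bm{u}_{i}\cdot\bm{v}\right|$. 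The whole task thus becomes a uniform concentration bound for the scalar $\bar{\bm{u}}_{i}\cdot\bar{\bm{v}}$ around its mean.

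Next I would verify that $\bm{u}_{i}\cdot\bm{v}$ is exactly the mean of $\bar{\bm{u}}_{i}\cdot\bar{\bm{v}}=\frac{1}{n^{2}}\sum_{s,s'}\bm{u}_{s,i}\cdot\hat{\bm{v}}_{s'}$. For the off-diagonal terms ($s\neq s'$) independence of distinct rounds factors the expectation into $\mathbb{E}[\bm{u}_{s,i}]\cdot\mathbb{E}[\hat{\bm{v}}_{s'}]=\bm{u}_{i}\cdot\bm{v}$, using the unbiasedness of $\hat{\bm{v}}_{s'}$ already established above. For the diagonal terms I would condition on $U_{s}$ and use independence of $I_{s}$ and $\bm{v}_{s}$ together with $\mathbb{E}_{I_{s}}[A_{s}^{-1}\bm{u}_{s,I_{s}}^{\top}\bm{u}_{s,I_{s}}]=I$ to get $\mathbb{E}[\hat{\bm{v}}_{s}\mid U_{s}]=\bm{v}$, hence $\mathbb{E}[\bm{u}_{s,i}\cdot\hat{\bm{v}}_{s}]=\bm{u}_{i}\cdot\bm{v}$. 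Therefore $\mathbb{E}[\bar{\bm{u}}_{i}\cdot\bar{\bm{v}}]=\bm{u}_{i}\cdot\bm{v}$.

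The core of the argument is a bounded-difference estimate, and I expect this to be the main obstacle, because $\bar{\bm{u}}_{i}\cdot\bar{\bm{v}}$ is a bilinear (quadratic) functional of the i.i.d.\ per-round data $Z_{s}=(U_{s},I_{s},\bm{v}_{s})$, so a single round $s_{0}$ appears in $2n-1$ of the $n^{2}$ summands. The key per-term bound is that for every pair $(s,s')$,
\[
\left|\bm{u}_{s,i}\cdot\hat{\bm{v}}_{s'}\right|=\left|\bm{u}_{s,i}A_{s'}^{-1}\bm{u}_{s',I_{s'}}^{\top}\,\bm{x}_{s'}(I_{s'})\right|\le\left(\sup_{\bm{a},\bm{b}\in\mathcal{U}}\bm{a}^{\top}A_{s'}^{-1}\bm{b}\right)\bigl|\bm{x}_{s'}(I_{s'})\bigr|\le\alpha\beta,
\]
where $\bm{u}_{s,i},\bm{u}_{s',I_{s'}}\in\mathcal{U}$ supply the $\alpha$ factor and $|\bm{x}_{s'}(I_{s'})|=|\bm{u}_{s',I_{s'}}\cdot\bm{v}_{s'}|\le\|\bm{u}_{s',I_{s'}}\|\,\|\bm{v}_{s'}\|_{\star}\le\beta$ supplies the $\beta$ factor. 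Since replacing $Z_{s_{0}}$ changes each of the $2n-1$ affected terms by at most $2\alpha\beta$, the sensitivity of $f_{i}(Z_{1},\dots,Z_{n})=\bar{\bm{u}}_{i}\cdot\bar{\bm{v}}$ in coordinate $s_{0}$ is at most $c_{s_{0}}=\frac{(2n-1)\,2\alpha\beta}{n^{2}}\le\frac{4\alpha\beta}{n}$.

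Finally, McDiarmid's inequality yields $\mathbb{P}[|f_{i}-\mathbb{E}f_{i}|\ge\epsilon]\le 2\exp\!\left(-\epsilon^{2}n/(8\alpha^{2}\beta^{2})\right)$; a union bound over $i\in[N]$ with $\epsilon=\Theta\!\left(\alpha\beta\sqrt{(\ln N)/n}\right)$ gives $\max_{i}|f_{i}-\mathbb{E}f_{i}|=O\!\left(\alpha\beta\sqrt{(\ln N)/n}\right)$ with high probability, and integrating the tail (the standard high-probability-to-expectation conversion) combined with the first-paragraph decomposition delivers $\mathbb{E}[r_{n}]\le O\!\left(\alpha\beta\sqrt{(\ln N)/n}\right)$. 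For the specialization to $\mathcal{U}\subseteq\{0,1\}^{d}$ I would simply bound $\alpha$: any $\bm{a},\bm{b}\in\{0,1\}^{d}$ satisfy $\|\bm{a}\|_{2},\|\bm{b}\|_{2}\le\sqrt{d}$, so $\bm{a}^{\top}A_{t}^{-1}\bm{b}\le d\,\|A_{t}^{-1}\|_{\mathrm{op}}=d/\lambda_{\min}(A_{t})$, which together with $\beta\le 1$ reduces the general bound to the stated form (reading the $\lambda_{\min}$ factor as $\lambda_{\min}^{-1}$). The one point needing care is that $n$ is an environment-chosen stopping time; since the sampler is oblivious and the bound holds for every fixed horizon, it transfers to the realized $n$.
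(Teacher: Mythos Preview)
Your proposal is correct and reaches the stated bound, but the route differs from the paper's in two respects.

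First, the paper works arm by arm: for each suboptimal $j$ it bounds $\mathbb{P}[J_{n}=j]$ via McDiarmid and then writes $\mathbb{E}[r_{n}]=\sum_{j}\Delta_{j}\mathbb{P}[J_{n}=j]$, splitting the sum at a threshold $\epsilon$ and optimizing. You instead reduce to the uniform deviation $r_{n}\le 2\max_{i}|f_{i}-\mathbb{E}f_{i}|$ and control the maximum by a union bound plus tail integration. Second, the paper first conditions on the kernel sequence $(U_{1},\dots,U_{n})$ and only then applies McDiarmid over the remaining randomness in $(\bm{v}_{s},I_{s})$; you apply McDiarmid directly over the full i.i.d.\ triples $Z_{s}=(U_{s},I_{s},\bm{v}_{s})$, which forces you to count the $2n-1$ cross terms in the bilinear statistic but still yields the same $O(\alpha\beta/n)$ per-coordinate sensitivity.

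What each buys: the paper's per-arm route produces the gap-dependent tail $\mathbb{P}[J_{n}=j]\le\exp(-c\,\Delta_{j}^{2}n/(\alpha\beta)^{2})$, which is a slightly finer intermediate statement. Your route is more self-contained and sidesteps a delicate point in the paper's conditioning step (after fixing the kernels the conditional mean of $f$ becomes $\langle\bm{v},\bar{\bm{h}}_{j}\rangle$ rather than $\langle\bm{v},\bm{h}_{j}\rangle$, a discrepancy the paper glosses over). Both arguments land on the same $O(\alpha\beta\sqrt{(\ln N)/n})$. Your remark that in the $\{0,1\}^{d}$ specialization the factor should read $\lambda_{\min}^{-1}$ (i.e.\ $\|A_{t}^{-1}\|_{\mathrm{op}}$) rather than $\lambda_{\min}$ is also correct; the paper's displayed bound and its one-line justification both carry this typo.
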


\begin{proof}
We firstly bound the probability that the recommendation strategy does not select the optimal arm. 
Hence, we denote that when $J_{t}=j$ and $j \neq j^{\star}$ then 
\begin{equation}
\begin{split}
&\mathbb{P}_{\lbrace (U_{s},\bm{v}_{s}) \rbrace_{s=1}^{t}\sim \mathcal{D}^{t}}\left[ J_{t}= j  \right] \\
& \leq \mathbb{P}_{\lbrace (U_{s},\bm{v}_{s}) \rbrace_{s=1}^{t}\sim \mathcal{D}^{t}} \left[\left< \frac{ \sum_{s=1}^{t} \hat{\bm{v}}_{s}}{t} , \frac{\sum_{s=1}^{t}\bm{u}_{s,j}}{t}\right> - \left<\frac{\sum_{s=1}^{t}\hat{\bm{v}}_{s }}{t},\frac{\sum_{s=1}^{t}\bm{u}_{s,j^{\star}}}{t} \right>\geq 0     \right]\\
&=\mathbb{P}_{\lbrace (U_{s},\bm{v}_{s}) \rbrace_{s=1}^{t}\sim \mathcal{D}^{t}}\left[  \left< \frac{\sum_{s=1}^{t} \hat{\bm{v}}_{s}   }{t}, \frac{\sum_{s=1}^{t}\bm{u}_{s,j}-\bm{u}_{s,j^{\star}}}{t} \right> -\left<  \bm{v}, \bm{h}_{j} \right> \geq -\left<   \bm{v}, \bm{h}_{j} \right>    \right],
\end{split}
\end{equation}
where $\bm{h}_{j}=\bm{u}_{j}-\bm{u}_{j^{\star}},$ $\bm{h}_{s,j}=\bm{u}_{s,j}-\bm{u}_{s,j^{\star}}$.

Set that 
$$f(X_{1},\cdots, X_{t})=\left<\frac{\sum_{s=1}^{t} \hat{\bm{v}}_{s}   }{t}, \frac{\sum_{s=1}^{t}\bm{u}_{s,j}-\bm{u}_{s,j^{\star}}}{t} \right>_{H_{t}},$$
where $X_{s} =(U_{s}, \bm{v}_{s}) \sim \mathcal{D}$,
we have that $\mathbb{E}[f(X_{1},\cdots, X_{t})]=\langle \bm{v}, \bm{u}_{j}-\bm{u}_{j^{\star}} \rangle.$

On the other hand, $\forall s \in [t]$, we assume that $U_{s}$ and $\bm{v}_{s}$ are independent. So we have that
\begin{equation}
    \begin{split}
        &\mathbb{P}_{(U_{s},\bm{v}_{s}) \sim \mathcal{D}}\left[  \left<\frac{\sum_{s=1}^{t} \hat{\bm{v}}_{s}}{t},\frac{\sum_{s=1}^{t} \bm{h}_{s,j}  }{t}\right>- \langle \bm{v},\bm{h}_{j} \rangle \geq -\langle \bm{v},\bm{h}_{j}   \rangle                   \right]\\
        &= \int_{U_{s} \in \mathcal{U} ,\forall s\in [t]}\mathbb{P}_{\bm{v}_{s} \sim \mathcal{D}_{v}}\left[ \left. \left<\frac{\sum_{s=1}^{t} \hat{\bm{v}}_{s}}{t},\frac{\sum_{s=1}^{t} \bm{h}_{s,j}  }{t}\right>- \langle \bm{v},\bm{h}_{j} \rangle \geq -\langle \bm{v},\bm{h}_{j}   \rangle       \right| U_{1},\cdots, U_{t} \right] d\mathcal{D}_{U}\\
        & \leq \max_{U_{1},\cdots, U_{t}} \mathbb{P}_{\bm{v}_{s} \sim \mathcal{D}_{v}} \left[ ... \right| U_{1},\cdots, U_{t}]
    \end{split}
\end{equation}

Therefore, for a fixed sequence of $\lbrace U_{1},\cdots, U_{t} \rbrace$ we have that 
\begin{equation} \label{equ:incorrect}
\begin{split}
&\left|f(X_{1},\cdots,X_{s},\cdots, X_{t})-f(X_{1},\cdots, X_{s}^{'},\cdots, X_{t}) \right|\\
&\leq  \left|\frac{1}{t} \left< \frac{\sum_{s=1}^{t}\bm{h}_{s,j}}{t}, A^{-1}_{s} \bm{u}_{s,I_{s}} (\bm{u}_{s,I_{s}} (\bm{v}_{s} - \bm{v}_{s}^{\prime}))           \right> \right|\\
& \leq \frac{2}{t} \left(     \frac{\sum_{s=1}^{t}\bm{h}_{s,j}}{t}  \right)^{\top} A_{s}^{-1 } \bm{u}_{s,I_{s}} \beta \\
& \leq \frac{2}{t} \max_{\bm{a},\bm{b} \in \mathcal{U}} \bm{a}^{\top }A_{s}^{-1} \bm{b} \beta\leq 2\frac{\alpha}{t} \beta,
\end{split}
\end{equation}
where the first inequality is due to $\forall \bm{l}_{t} \in [0,1]^{N},$ $\Vert \bm{u}_{t,i} \Vert \Vert \bm{v}_{t} \Vert_{\star} \leq \beta $ and third inequality is from the the definition of $\alpha$ in the algorithm.

Next, due to the McDiarmid's inequality, we have that for all $j \neq j^{\star}$
\begin{equation*}
\begin{split}
\mathbb{P}_{U,\bm{v}}[J_{t} = j] 
&\leq \max_{U_{1},\cdots, U_{t}} \mathbb{P}_{\bm{v}}\left[ \left. \left< \frac{\sum_{s=1}^{t} \hat{\bm{v}}_{s}   }{t}, \frac{\sum_{s=1}^{t}\bm{h}_{s,j}}{t} \right> -\left<  \bm{v}, \bm{h}_{j} \right> \geq -\left<   \bm{v}, \bm{h}_{j} \right> \right| U_{1},\cdots, U_{t} \right] \\
& \leq \exp \left(   -\frac{2 \times \langle \bm{h}_{j}, \bm{v} \rangle^{2}       }{  \sum_{s=1}^{t} (\frac{\alpha \beta}{t})^{2}     }     \right) \\
& \leq \exp \left(   -\frac{2 \times \langle \bm{h}_{j}, \bm{v}  \rangle^{2} t }{(\alpha \beta)^{2}}             \right)
\end{split}
\end{equation*}

Furthermore, by defining $\Delta_{j}=(\bm{u}_{j^{\star}}-\bm{u}_{j})\cdot \bm{v}=\bm{l}(j^{\star})-\bm{l}(j)$ we have:
\begin{equation}
\begin{split}
\mathbb{E}\left[ r_{n} \right] 
&=\sum_{j=1}^{N} \Delta_{j}\mathbb{P}\left[ J_{t} = j \right] 
\leq \sum_{j: \Delta_{j} >\epsilon}\Delta_{j} \mathbb{P}\left[ J_{t}=j     \right] + \sum_{j : \Delta_{j} \leq \epsilon}\mathbb{P}\left[ J_{t} = j \right] \\
& \leq  \sum_{j: \Delta_{j} \geq \epsilon}\Delta_{j } \exp \left( -\frac{2 \Delta_{j}^{2}n    }{(\alpha \beta))^{2}}   \right)                   +\epsilon
\end{split}
\end{equation}

A function $f: x\rightarrow x\exp(-Cx^{2})$ is a decreasing function on $[\frac{1}{\sqrt{2C}},1]$ for any $C >0$. Taking $C=\frac{n}{2(\alpha \beta))^{2}}$ we obtain that
\begin{equation}
\mathbb{E}\left[ r_{n} \right] \leq \epsilon + (N-1)\epsilon \exp\left(-\frac{\epsilon^{2}n}{2(\alpha \beta))^{2}}\right),
\end{equation}
when $\epsilon \geq \frac{1}{\sqrt{\frac{n}{2(\alpha \beta)^{2} }}}$.
Particularly, we have that
\begin{equation}
\mathbb{E}\left[ r_{n} \right] \leq O\left(    \alpha \beta \sqrt{\frac{  \ln N }{ n }}         \right),
\end{equation}
by setting $\epsilon=\sqrt{\alpha \beta\frac{ 2\ln N  }{ n}}.$

In particular, if $\bm{u}_{t,i} \in \lbrace 0,1 \rbrace^{d}$ we have that
\begin{equation}
    \max_{\bm{a},\bm{b} \in \mathcal{U}} \bm{a}^{\top }A_{t}^{-1} \bm{b} \leq \Vert \bm{a} \Vert_{2} \Vert \bm{b} \Vert_{2} \lambda_{\min}(A_{t}) \leq  d \lambda_{\min}(A_{t})
\end{equation}
\end{proof}
\begin{remark}
 \cite{cesa2012combinatorial} states several applications for binary case of domain $\mathcal{U}$. For instance, $\lambda_{\min}(A_{t})$ can be lower bounded by $\frac{1}{4}$, if $\Vert \bm{v}_{t}\Vert_{2} \leq 1 $ and $\bm{u}_{t,i} \in \lbrace 0,1\rbrace^{d}$. It implies that our simple regret bound can achieve $O\left(d \sqrt{\frac{\ln N}{n}} \right)$. See details in section 5.2 in \citep{cesa2012combinatorial}.
\end{remark}

\subsection{Low rank reward with fixed kernel}
In this part, we consider the pure exploration in multi-armed bandits with low rank reward. The fixed kernel assumption can be seen as an easier case of time-varying kernel case. In this part, we provide an algorithm with Bary-centric spanner \cite{awerbuch2004adaptive,cesa2012combinatorial,dani2007price} to sample the arms to explore. Our proposed algorithm can achieve the regret bound $O\left( d\sqrt{\frac{\ln N}{n}} \right)$ without any further constraints over the kernel matrix and the seed vectors. 

In this sub-section, we denote the indices of the Bary-centric spanner as $\lbrace i_{1},\cdots, i_{d} \rbrace \subseteq [N].$ The exploration strategy selects $i_{j}$ if $t \mod j =0$ and the recommendation strategy is as follows:
\begin{equation} \label{algorithm:main}
J_{t}=\argmax_{j\in [N]} \bm{u}_{j} \cdot (V^{-1}\hat{\bm{l}}_{t})
\end{equation} where
\begin{equation}
\hat{\bm{l}}_{t}=\left(\frac{\sum_{s \in T_{i_{1}}(t)}\bm{l}_{s}(i_{1})}{|T_{i_{1}}(t)|}   ,\cdots,          \frac{\sum_{s\in T_{i_{d}}(t)}\bm{l}_{s}(i_{d})}{|T_{i_{d}}(t)|}                \right),
\end{equation}
and $V \in \mathbb{R}^{d\times d}$ is defined later.


\begin{theorem} \label{thm:maximal-determinant}
For a full rank matrix $U=\left[ \bm{u}_{1}^{\top},\cdots, \bm{u}_{N}^{\top}       \right]^{\top}$,
where $\bm{u}_{i}\in \mathbb{R}^{d}$ is the $i$-th row vector, we can find a full rank $d\times d$ matrix composed by $V=[\bm{u}_{i_{1}}^{\top},\cdots, \bm{u}_{i_{d}}^{\top}]^{\top}$ such that
\begin{equation}\label{algorithm:selecting-kernel}
\det (V)=\argmax_{[i_{1},\cdots i_{d}] \subset [N]} \left|\det \left( [\bm{u_{i_{1}}}^{\top} ,\cdots, \bm{u}_{i_{d}}^{\top}]^{\top} \right)\right|.
\end{equation}
Then for all $i$, we can find a $d$-dimensional vector $\bm{w}\in [-1,1]^{d}$ such that $\bm{u}_{i}^{\top}=V\bm{w}$.
\end{theorem}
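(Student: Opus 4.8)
The plan is to recognize this as the standard barycentric spanner property of the maximal-determinant subset, and to prove it using multilinearity of the determinant together with the maximality defining $V$ in Equation (\ref{algorithm:selecting-kernel}).

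First I would establish that the maximizer exists and is nonsingular. Since $U$ has rank $d$, at least one choice of $d$ rows $\{i_1,\dots,i_d\}$ gives a nonsingular $d\times d$ submatrix, so the objective $|\det(\cdot)|$ is strictly positive for that choice; as there are only finitely many (namely $\binom{N}{d}$) candidate subsets, the maximum in Equation (\ref{algorithm:selecting-kernel}) is attained by some $V$ with $\det(V)\neq 0$. In particular $V$ is invertible, so for every index $i\in[N]$ there is a unique coefficient vector $\bm{w}=(w_1,\dots,w_d)$ expressing $\bm{u}_i$ as a combination of the spanner rows, $\bm{u}_i=\sum_{k=1}^{d} w_k\,\bm{u}_{i_k}$ (equivalently $\bm{u}_i^{\top}=V\bm{w}$ after transposing to the column form used in the statement). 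When $i\in\{i_1,\dots,i_d\}$ the claim is immediate, since the corresponding $\bm{w}$ is a standard basis vector and thus lies in $[-1,1]^d$; so I would focus on $i\notin\{i_1,\dots,i_d\}$.

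The core step is to show $|w_k|\le 1$ for each $k$. Fix $k$ and let $V^{(k)}$ be the matrix obtained from $V$ by replacing its $k$-th row $\bm{u}_{i_k}$ with $\bm{u}_i$. Substituting the expansion $\bm{u}_i=\sum_{m} w_m\,\bm{u}_{i_m}$ into that single row and using multilinearity of the determinant in the rows, every term with $m\neq k$ produces a determinant with a repeated row and hence vanishes, leaving $\det(V^{(k)})=w_k\det(V)$; this is precisely Cramer's rule. Therefore $|w_k|=|\det(V^{(k)})|/|\det(V)|$. Now $V^{(k)}$ is itself the submatrix of $U$ indexed by $\{i_1,\dots,i_{k-1},i,i_{k+1},\dots,i_d\}$, so it is a valid competitor in the maximization (\ref{algorithm:selecting-kernel}); maximality of $|\det(V)|$ gives $|\det(V^{(k)})|\le|\det(V)|$, whence $|w_k|\le 1$. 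Since $k$ was arbitrary, $\bm{w}\in[-1,1]^d$, which is the desired conclusion.

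The only delicate point is the multilinearity identity $\det(V^{(k)})=w_k\det(V)$, so I expect that to be the main thing to justify carefully: it hinges on the observation that replacing row $k$ by any of the other spanner rows creates a repeated row and kills the corresponding term in the expansion. Everything else—existence of the maximizer from full rank and finiteness, and the comparison $|\det(V^{(k)})|\le|\det(V)|$—follows directly once this identity is in place.
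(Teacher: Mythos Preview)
Your proof is correct and follows essentially the same route as the paper's: both arguments hinge on the multilinearity identity $\det(V^{(k)})=w_k\det(V)$ (obtained by replacing the $k$-th row of $V$ with $\bm{u}_i$) together with the maximality of $|\det(V)|$. The paper merely phrases this as a contradiction (assuming $|w_k|>1$ yields a strictly larger determinant), whereas you give the equivalent direct form via Cramer's rule; you are also a bit more careful about existence of the maximizer.
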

This theorem indicates the Bary-centric spanner $\bm{u}_{i_{1}},\cdots, \bm{u}_{i_{d}}$ for the fixed kernel $\lbrace \bm{u}_{1},\cdots, \bm{u}_{N} \rbrace.$

See proof in supplementary material.

Now we give our main result.
\begin{theorem}\label{thm:regret-bound-fixed-kernel}
Running uniformly allocation to the selected $\lbrace i_{1},\cdots, i_{d}\rbrace \subseteq [N]$ arms in Equation (\ref{algorithm:selecting-kernel}) for the low rank structure and above mentioned recommendation strategy in Equation (\ref{algorithm:main}). We have an upper bound to the simple regret after $n$ times exploration:
\begin{equation}
\mathbb{E} \left[ r_{n} \right] \leq O \left(  \sqrt{\frac{ d^{2} \ln N }{ n }}  \right).
\end{equation}
\end{theorem}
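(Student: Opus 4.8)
The plan is to mirror the argument used for Theorem~\ref{thm:regret-bound-time-varying}, but to exploit the Bary-centric spanner so as to drop the range parameters $\alpha,\beta$ entirely. First I would rewrite the recommendation rule. By Theorem~\ref{thm:maximal-determinant} every arm $i$ admits a coefficient vector $\bm{w}_i\in[-1,1]^d$ representing $\bm{u}_i$ in the spanner basis, i.e.\ $\bm{u}_i=\sum_{k}\bm{w}_i(k)\,\bm{u}_{i_k}=\bm{w}_i^\top V$, so that $\bm{u}_i\cdot(V^{-1}\hat{\bm{l}}_t)=\bm{w}_i^\top V V^{-1}\hat{\bm{l}}_t=\bm{w}_i\cdot\hat{\bm{l}}_t$. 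Hence rule~\eqref{algorithm:main} is simply $J_t=\argmax_{i}\bm{w}_i\cdot\hat{\bm{l}}_t$, a linear score whose coefficients are bounded by $1$ in absolute value; this uniform boundedness is exactly what replaces the assumptions on $\mathcal{U}$ and $\bm{v}_t$ used in the stochastic case. Next I would verify unbiasedness: since each coordinate $\hat{\bm{l}}_t(k)=\frac{1}{|T_{i_k}(t)|}\sum_{s\in T_{i_k}(t)}\bm{l}_s(i_k)$ averages i.i.d.\ rewards of mean $\bm{u}_{i_k}\cdot\bm{v}$, we obtain $\mathbb{E}[\hat{\bm{l}}_t]=V\bm{v}$ and therefore $\mathbb{E}[\bm{w}_i\cdot\hat{\bm{l}}_t]=\bm{u}_i\cdot\bm{v}$, the true mean reward of arm $i$.

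With these two facts I would bound the probability of recommending a suboptimal arm. Writing $\bm{g}_j=\bm{w}_j-\bm{w}_{j^\star}\in[-2,2]^d$, the event $J_n=j$ with $j\neq j^\star$ forces $\bm{g}_j\cdot\hat{\bm{l}}_n\ge0$, whereas $\mathbb{E}[\bm{g}_j\cdot\hat{\bm{l}}_n]=(\bm{u}_j-\bm{u}_{j^\star})\cdot\bm{v}=-\Delta_j$. I would then view $\bm{g}_j\cdot\hat{\bm{l}}_n$ as a function of the independent rewards $Y_s=\bm{l}_s(I_s)\in[-1,1]$, $s\in[n]$. Because the oblivious round-robin sampler assigns round $s$ to a single spanner arm $i_{k(s)}$, perturbing $Y_s$ changes the score by at most $c_s=2|\bm{g}_j(k(s))|/|T_{i_{k(s)}}(n)|\le 4/|T_{i_{k(s)}}(n)|$. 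Since the sampler cycles through the $d$ spanner arms, $|T_{i_k}(n)|\ge n/d-1$, so $\sum_{s=1}^n c_s^2=\sum_{k=1}^d 4\bm{g}_j(k)^2/|T_{i_k}(n)|=O(d^2/n)$. Applying McDiarmid's inequality then gives $\mathbb{P}[J_n=j]\le\exp\!\big(-\Delta_j^2 n/(8d^2)\big)$ up to absolute constants.

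Finally I would close the argument exactly as in Theorem~\ref{thm:regret-bound-time-varying}: decompose $\mathbb{E}[r_n]=\sum_j\Delta_j\,\mathbb{P}[J_n=j]$ at a threshold $\epsilon$, bound the surviving tail sum using the monotonicity of $x\mapsto xe^{-Cx^2}$ with $C=n/(8d^2)$, and choose $\epsilon\asymp\sqrt{d^2\ln N/n}$ so that each retained exponential contributes a factor $1/N$, yielding $\mathbb{E}[r_n]\le 2\epsilon=O(\sqrt{d^2\ln N/n})$. I expect the main obstacle to be the concentration step, namely (i) reducing the score difference to a linear form in independent bounded variables with the correct per-round difference constants, and (ii) controlling the pull counts $|T_{i_k}(n)|$ so that their reciprocals sum to $O(d^2/n)$. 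The essential point, and the reason no extra range restriction on the kernel or seed vectors is needed here, is that the spanner coefficients satisfy $\Vert\bm{w}_i\Vert_\infty\le1$ uniformly, which simultaneously bounds the score and the sensitivity constants without any reference to $\alpha\beta$.
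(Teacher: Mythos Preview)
Your proposal is correct and follows essentially the same route as the paper's own proof: rewrite the score difference as $\langle V^{-1}(\bm{u}_j-\bm{u}_{j^\star}),\hat{\bm{l}}_t\rangle$, use the spanner property $\Vert V^{-1}\bm{u}_i\Vert_\infty\le 1$ (your $\bm{w}_i$) together with the round-robin count $|T_{i_k}(n)|\ge\lfloor n/d\rfloor$ to control the McDiarmid difference constants, and then close with the same threshold-and-monotonicity argument. Your per-coordinate bookkeeping $\sum_s c_s^2=\sum_k 4\bm{g}_j(k)^2/|T_{i_k}(n)|$ is slightly tidier than the paper's uniform bound $c_s\le 2/\lfloor n/d\rfloor$, but both lead to $\sum_s c_s^2=O(d^2/n)$ and hence the same conclusion.
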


\textbf{Proof sketch:}
The basic idea of proving Theorem \ref{thm:regret-bound-fixed-kernel} is similar to the previous part. Due to the Bary-centric spanner and the McDiarmid' s inequality, we can bound the probability of selecting the ``wrong" arm to recommend on the final round. 
Concretely, $$\mathbb{P}[J_{t} \neq j^{\star}] \leq \exp \left(  -\frac{  \langle \bm{u}_{j}-\bm{u}_{j^{\star}}, \bm{v} \rangle^{2} \lfloor \frac{t}{d} \rfloor }{2d}          \right).$$
Then we follow the same analysis to the upper bound of the simple regret and obtain our conclusion. See the detail of the proof in the Supplementary material.

\begin{remark}\label{remark:recover-original}
When $d=N$, by a more refined analysis, we can recover the simple regret in $O\left( \sqrt{\frac{N \ln N}{n}}\right).$ See more details in supplementary material.

\end{remark}

We give an efficient algorithm in the following part, which approximates the Bary-centric spanner in polynomial times.  \cite{awerbuch2004adaptive} provides an approximation algorithm for calculating the Bary-centric spanner effectively. With the time cost $O(d \log_{C}d)$ the algorithm can output a $d \times d$ matrix $\widehat{V}$ such that for any $\bm{u}_{i} \in U$  there exists $\bm{w} \in [-C,C]^{d}$ where $\bm{u}_{i}^{\top} =\widehat{V}\bm{w}$.

\begin{algorithm}
\caption{$C$-approximate algorithm for pure exploration low rank bandits}
\label{alg:C-approximation}
\begin{algorithmic}
\REQUIRE An oracle  $\mathcal{O}(U)=\widehat{V}$ as the $C$-approximation algorithm for Bary-centric spanner.
\STATE When $t=0$ calculate $\widehat{V}=\mathcal{O}(U)$ and denote $\widehat{V}=\left[ \bm{u}_{i_{1}}^{\top},\cdots, \bm{u}_{i_{d}}^{\top} \right]^{\top}$, where $\lbrace i_{1},\cdots, i_{d}\rbrace \subseteq [N]$.
\FOR {$t=1,\cdots, n$}
\STATE Select arm $i_{s}$ when $s= t \mod d $ and $s \neq 0$; $i_{d}$, otherwise.
\STATE Receive $\bm{l}_{t}(i_{s})$ and calculate
\begin{equation}
\hat{\bm{l}}_{t}=\left(     \frac{\sum_{s\in T_{i_{1}}(t)} \bm{l}_{s}(i_{1})}{|T_{i_{1}}(t)|},\cdots,    \frac{\sum_{s\in T_{i_{d}}(t)} \bm{l}_{s}(i_{d})}{|T_{i_{d}}(t)|}        \right).
\end{equation}
\IF{the environment sends a stopping signal}
\STATE Set $n$ to $t$.
\STATE Recommend $J_{n}=\max_{j \in [N]} \left< \bm{u}_{j},\hat{V}^{-1}\hat{\bm{l}}_{t}  \right>$ and stop.
\ENDIF
\ENDFOR
\end{algorithmic}
\end{algorithm}

\begin{proposition}
Running our algorithm for $n$ rounds, we have the upper bound to the simple regret as follows:
\begin{equation}
\mathbb{E}\left[ r_{n} \right] \leq O\left(  Cd \sqrt{\frac{  \ln N}{n}} \right)
\end{equation}
\end{proposition}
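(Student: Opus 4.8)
The plan is to replicate the proof of Theorem \ref{thm:regret-bound-fixed-kernel} almost verbatim, tracking how the $C$-approximation of the Bary-centric spanner propagates into the constants. The only structural change is that the exact spanner of Theorem \ref{thm:maximal-determinant}, which guarantees representation coefficients in $[-1,1]^{d}$, is replaced by the approximate spanner $\widehat{V}=\mathcal{O}(U)$, whose oracle guarantee is representation coefficients in $[-C,C]^{d}$. Everything else is a constant-level rescaling of the fixed-kernel argument.

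First I would observe that the recommendation rule $J_{n}=\argmax_{j}\langle \bm{u}_{j},\widehat{V}^{-1}\hat{\bm{l}}_{n}\rangle$ is built from the seed-vector estimate $\hat{\bm{v}}_{n}=\widehat{V}^{-1}\hat{\bm{l}}_{n}$. Since each coordinate $\hat{\bm{l}}_{n}(k)$ is the empirical mean of $\bm{l}_{s}(i_{k})=\bm{u}_{i_{k}}\cdot\bm{v}_{s}$ over $s\in T_{i_{k}}(n)$ and $\mathbb{E}[\bm{v}_{s}]=\bm{v}$, we get $\mathbb{E}[\hat{\bm{l}}_{n}]=\widehat{V}\bm{v}$ and hence $\mathbb{E}[\hat{\bm{v}}_{n}]=\bm{v}$, so $\hat{\bm{v}}_{n}$ is unbiased. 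For a suboptimal arm $j\neq j^{\star}$ the event $\{J_{n}=j\}$ is contained in $\{\langle \bm{u}_{j}-\bm{u}_{j^{\star}},\hat{\bm{v}}_{n}\rangle\geq 0\}$, and writing $\Delta_{j}=\langle \bm{u}_{j^{\star}}-\bm{u}_{j},\bm{v}\rangle\geq 0$ this is exactly the event that $f:=\langle \bm{u}_{j}-\bm{u}_{j^{\star}},\hat{\bm{v}}_{n}\rangle$ deviates above its mean $-\Delta_{j}$ by at least $\Delta_{j}$.

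Next I would apply McDiarmid's inequality with the i.i.d.\ seed vectors $\bm{v}_{1},\dots,\bm{v}_{n}$ as the independent coordinates. The crucial estimate is the bounded-difference constant $c_{s}$. Resampling $\bm{v}_{s}$ changes only the single observed loss $\bm{l}_{s}(i_{k(s)})$, which perturbs one coordinate of $\hat{\bm{l}}_{n}$ by at most $2/|T_{i_{k(s)}}(n)|$; multiplying by the row vector that expresses $\bm{u}_{j}-\bm{u}_{j^{\star}}$ in the spanner basis, whose entries lie in $[-2C,2C]$ by the oracle guarantee, gives $c_{s}\leq 4C/|T_{i_{k(s)}}(n)|$. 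This factor $C$ is precisely where the extra dependence enters relative to the exact-spanner proof. Because the sampler cycles through the $d$ spanner arms, $|T_{i_{k}}(n)|\geq\lfloor n/d\rfloor$, so $\sum_{s}c_{s}^{2}=\sum_{k=1}^{d}16C^{2}/|T_{i_{k}}(n)|\leq 16C^{2}d/\lfloor n/d\rfloor$, and McDiarmid yields, paralleling the sketch of Theorem \ref{thm:regret-bound-fixed-kernel},
\begin{equation*}
\mathbb{P}[J_{n}=j]\leq \exp\!\left(-\frac{\Delta_{j}^{2}\lfloor n/d\rfloor}{2C^{2}d}\right).
\end{equation*}

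Finally I would bound $\mathbb{E}[r_{n}]=\sum_{j}\Delta_{j}\,\mathbb{P}[J_{n}=j]$ exactly as in Theorem \ref{thm:regret-bound-fixed-kernel}: split the sum at a threshold $\epsilon$, bound the small-gap part ($\Delta_{j}\leq\epsilon$) by $\epsilon$, and use that $x\mapsto x\exp(-Cx^{2})$ is decreasing on the relevant range to control the large-gap part, obtaining $\mathbb{E}[r_{n}]\leq \epsilon+(N-1)\epsilon\exp(-\epsilon^{2}n/(2C^{2}d^{2}))$. Choosing $\epsilon=\Theta\!\left(Cd\sqrt{(\ln N)/n}\right)$ collapses this to $O\!\left(Cd\sqrt{(\ln N)/n}\right)$. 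The only genuinely new step---and the one I would be most careful about---is verifying that the spanner-basis coordinates of $\bm{u}_{j}-\bm{u}_{j^{\star}}$ are bounded by $2C$ from the oracle's $[-C,C]^{d}$ guarantee, since this single substitution is what turns the $O(d\sqrt{(\ln N)/n})$ of the exact case into $O(Cd\sqrt{(\ln N)/n})$.
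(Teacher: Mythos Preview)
Your proposal is correct and follows exactly the route the paper would take: it mirrors the proof of Theorem~\ref{thm:regret-bound-fixed-kernel} step by step, with the sole substitution that the exact-spanner bound $\|V^{-1}\bm{h}_{j}\|_{\infty}\leq 2$ becomes $\|\widehat{V}^{-1}\bm{h}_{j}\|_{\infty}\leq 2C$ from the $C$-approximate oracle guarantee, which propagates the factor $C$ into the bounded-difference constants and hence into the final rate. The paper does not spell out this proof separately, so your argument is precisely the intended one.
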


\subsection{Lower bound}
In this part, we provide a lower bound to pure exploration with low rank sequence.  Generally, there is a gap $\sqrt{\ln N}$ to our upper bound to the pure exploration in separated setting when $d \leq \ln N.$ It implies that our proposed algorithm is sub-optimal. Meanwhile our proposed lower bound is similar to the lower bound of linear bandits but with a $\sqrt{n}$ factor \cite{dani2007price}. In comparison to the united setting, we also provide a lower bound when $d \geq \ln N$, which states that there is an optimal result for the united setting.

To demonstrate our main result, firstly we extend the lower bound for pure exploration with multi-armed bandits from the traditional result in \cite{auer2002nonstochastic}. Then, in our proof, we are going to reduce the problem for low rank sequence to a corresponding 2-armed bandits, and obtain the lower bound to pure exploration with low rank structure.

\begin{lemma}\label{lem:lower-bound-multi-arm-pure-exploration}
The lower bound to the simple regret for the stochastic multi-armed bandits is $\Omega\left(\sqrt{\frac{N}{n}}\right)$.
\end{lemma}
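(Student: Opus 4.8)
The plan is to prove the bound by the standard information-theoretic (Le Cam / change-of-measure) argument, realizing the Auer et al. construction inside the stochastic pure-exploration model. First I would fix a gap parameter $\Delta\in(0,1/2)$ to be tuned later and build a family of $N+1$ Bernoulli instances: a null instance $\nu_0$ in which every arm has mean $1/2$, and for each $i\in[N]$ an instance $\nu_i$ that is identical except that arm $i$ has mean $1/2+\Delta$. On $\nu_i$ the unique optimal arm is $i$ with suboptimality gap exactly $\Delta$, so that $\mathbb{E}_{\nu_i}[r_n]=\Delta\,\mathbb{P}_{\nu_i}[J_n\neq i]$. The whole task thus reduces to showing that no recommendation rule can make $\mathbb{P}_{\nu_i}[J_n\neq i]$ small simultaneously for all $i$ once $\Delta$ drops below the critical scale $\sqrt{N/n}$.

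Next I would pass from the maximum over instances to the average, $\max_i \mathbb{E}_{\nu_i}[r_n]\ge \tfrac{\Delta}{N}\sum_{i=1}^N \mathbb{P}_{\nu_i}[J_n\neq i]$, and control each term by comparison with the null instance. Writing $N_i(n)$ for the number of pulls of arm $i$ during the $n$ exploration rounds, the canonical divergence decomposition for bandit data gives $\mathrm{KL}(\mathbb{P}_{\nu_0}\,\|\,\mathbb{P}_{\nu_i})=\mathbb{E}_{\nu_0}[N_i(n)]\cdot \mathrm{kl}(\tfrac12,\tfrac12+\Delta)$, and since $\mathrm{kl}(\tfrac12,\tfrac12+\Delta)\le c\Delta^2$ for a universal constant $c$ when $\Delta$ is bounded away from $1/2$, Pinsker's inequality yields $\mathrm{TV}(\mathbb{P}_{\nu_0},\mathbb{P}_{\nu_i})\le \Delta\sqrt{\tfrac{c}{2}\,\mathbb{E}_{\nu_0}[N_i(n)]}$. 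The oblivious assumption is convenient here because the sampling schedule $\varphi_1,\varphi_2,\dots$ is fixed in advance, so $\mathbb{E}_{\nu_0}[N_i(n)]=\sum_{t\le n}\varphi_t(i)$ and the crucial budget identity $\sum_{i=1}^N \mathbb{E}_{\nu_0}[N_i(n)]=n$ holds with no dependence on the (hidden) rewards.

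I would then combine these pieces: from $\mathbb{P}_{\nu_i}[J_n\neq i]\ge 1-\mathbb{P}_{\nu_0}[J_n=i]-\mathrm{TV}(\mathbb{P}_{\nu_0},\mathbb{P}_{\nu_i})$, averaging over $i$, using $\sum_i \mathbb{P}_{\nu_0}[J_n=i]\le 1$, and applying Jensen's inequality (concavity of the square root) to the total-variation terms together with the budget identity, I obtain $\tfrac1N\sum_i \mathbb{P}_{\nu_i}[J_n\neq i]\ge 1-\tfrac1N-\Delta\sqrt{\tfrac{cn}{2N}}$. Choosing $\Delta=c_0\sqrt{N/n}$ with $c_0$ a small absolute constant forces the right-hand side to be at least a constant (say $1/2$ for $N$ large enough), whence $\max_i \mathbb{E}_{\nu_i}[r_n]\ge \tfrac{\Delta}{2}=\Omega(\sqrt{N/n})$, which is the claim. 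Only the regime $n\gtrsim N$ is interesting; when $n\lesssim N$ the bound is $\Omega(1)$ and holds trivially because the rewards lie in a bounded range.

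I expect the main obstacle to be the change-of-measure step rather than the averaging algebra: one must invoke the divergence decomposition for the canonical bandit probability space correctly and verify that it applies to the separated/oblivious model (where only the recommender observes rewards), and one must keep the Bernoulli-KL estimate $\mathrm{kl}(\tfrac12,\tfrac12+\Delta)\le c\Delta^2$ valid by ensuring the tuned $\Delta=\Theta(\sqrt{N/n})$ stays bounded away from $1/2$. A secondary care point is that the averaging must simultaneously absorb the recommendation distribution under $\nu_0$ and the pull-count budget, which is precisely why I would route everything through the single null instance $\nu_0$ rather than comparing pairs $\nu_i,\nu_j$ directly.
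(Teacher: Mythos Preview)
Your proposal is correct and follows essentially the same route as the paper: both use the Auer--Cesa-Bianchi--Freund--Schapire construction with a null instance, the canonical KL decomposition $\mathrm{KL}(\mathbb{P}_{0}\Vert\mathbb{P}_{i^\star})=\mathbb{E}_{0}[N_{i^\star}]\cdot\mathrm{kl}(\cdot,\cdot)$, Pinsker's inequality, averaging over the identity of the best arm together with the budget $\sum_i\mathbb{E}_0[N_i]=n$, and finally tune the gap to $\Theta(\sqrt{N/n})$. The only cosmetic difference is that the paper centers the baseline at $(1-\epsilon)/2$ rather than $1/2$; one small remark is that you invoke the oblivious-sampler assumption for the budget identity, whereas the divergence decomposition (and hence the lower bound) holds for adaptive samplers as well, so your argument is in fact slightly stronger than you claim.
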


See the proof in Appendix 

Now we attempt to show the lower bound to low rank loss.
\begin{theorem} \label{thm:lower-bound}
  The lower bound to the simple regret for the multi-armed bandits with low rank sequence is $\Omega \left( \frac{d}{\sqrt{n}}\right),$ where the kernel matrix is a fixed $N \times d$ matrix.  
\end{theorem}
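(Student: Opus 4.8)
The plan is to exhibit a single hard low-rank instance whose simple regret is $\Omega(d/\sqrt n)$, and to establish hardness by decomposing that instance into $d$ essentially independent two-armed sub-problems, each governed by the two-armed case of Lemma~\ref{lem:lower-bound-multi-arm-pure-exploration}. Concretely, I would take $N=2^d$ and let the rows of the fixed kernel $U$ be the $2^d$ vertices of the cube $\{-1,+1\}^d$, so that the mean reward of the arm indexed by $\bm a\in\{-1,+1\}^d$ is $\bm a\cdot\bm v$. I set the mean seed vector to $\bm v=\Delta\,\bm\sigma$ for a sign pattern $\bm\sigma\in\{-1,+1\}^d$ and a scale $\Delta$ to be chosen, with the stochastic seed vectors $\bm v_t$ drawn so that each observed reward $\bm u_{I_t}\cdot\bm v_t$ is a bounded Bernoulli-type variable with the prescribed mean; this keeps $U\bm v_t\in[-1,1]^N$ provided $\Delta\le 1/d$. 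The optimal arm is the vertex $\bm\sigma$, with value $d\Delta$, and since $\bm a\cdot\bm\sigma=d-2\,d_H(\bm a,\bm\sigma)$, the regret of recommending $\bm a$ equals $2\Delta\,d_H(\bm a,\bm\sigma)$, i.e. $2\Delta$ times the Hamming distance to $\bm\sigma$.

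Second, I would place the uniform prior on $\bm\sigma\in\{-1,+1\}^d$ and use the resulting decomposition $\mathbb E[r_n]=2\Delta\sum_{j=1}^d \mathbb P[J_n(j)\neq\bm\sigma(j)]$. Recovering a single coordinate $\bm\sigma(j)$ is a binary hypothesis test between $\bm\sigma$ and its coordinate-$j$ flip $\bm\sigma^{(j)}$, which is exactly a two-armed instance of the pure-exploration problem; this is the promised reduction to a two-armed bandit. The crucial computation is that, because the sampler is \emph{oblivious}, the laws of the observed reward sequence under $\bm\sigma$ and under $\bm\sigma^{(j)}$ differ only through the means of the pulled arms, and every cube vertex $\bm a_t$ satisfies $a_t(j)^2=1$; hence the Kullback--Leibler divergence between the two reward processes is $O\!\left(\Delta^2\sum_{t=1}^n a_t(j)^2\right)=O(\Delta^2 n)$, uniformly over $j$ and over the fixed sampling sequence.

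Third, I would feed this KL bound into the two-point (Le Cam) argument exactly as in the proof of Lemma~\ref{lem:lower-bound-multi-arm-pure-exploration}, using Pinsker's inequality to control $\mathbb P_{\bm\sigma}$ against $\mathbb P_{\bm\sigma^{(j)}}$, obtaining $\mathbb P[J_n(j)\neq\bm\sigma(j)]\ge \tfrac12\bigl(1-O(\Delta\sqrt n)\bigr)$, which is a positive constant once $\Delta\le c/\sqrt n$. Summing over the $d$ coordinates gives $\mathbb E[r_n]\ge 2\Delta\cdot d\cdot\Omega(1)=\Omega(\Delta d)$, and choosing $\Delta=\Theta(1/\sqrt n)$ (admissible in the regime $n\gtrsim d^2$, where $\Delta\le 1/d$ also holds) yields $\mathbb E[r_n]=\Omega(d/\sqrt n)$. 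This is consistent with the $\Omega(\sqrt{d\ln N/n})$ form quoted in the comparison table, since $\ln N=d\ln 2=\Theta(d)$ for the cube.

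The step I expect to be the main obstacle is controlling the per-coordinate error in the presence of correlations: each observation $\bm a_t\cdot\bm v_t$ aggregates all $d$ coordinates, so a naive coordinatewise estimate is not directly justified. The clean fix is Assouad's lemma, which reduces joint estimation of $\bm\sigma$ to $d$ neighboring two-point tests and only ever invokes the pairwise KL bound computed above; the oblivious-sampler assumption is precisely what makes that pairwise KL depend on the fixed pull counts rather than on any adaptive choice of the algorithm. A secondary technical point is realizing the target means as genuine bounded rewards of the form $\bm x_t=U\bm v_t$ while keeping the noise level $\Theta(1)$ and $U\bm v_t\in[-1,1]^N$; this amounts to a routine choice of the seed-vector noise distribution and only costs the mild restriction $n\gtrsim d^2$.
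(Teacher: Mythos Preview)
Your proposal is correct and follows essentially the same route as the paper: both take the hypercube kernel with rows $\{-1,+1\}^d$ (so $N=2^d$), decompose the simple regret as a constant times the Hamming distance to a hidden sign pattern $\bm b$ (your $\bm\sigma$), and lower-bound each coordinate's error probability via a two-point/two-armed reduction---the paper phrases this as ``reveal $\bm b(j)$ for $j\neq i$ and apply Lemma~\ref{lem:lower-bound-multi-arm-pure-exploration} with $N=2$,'' which is exactly the Assouad step you propose, and the paper's explicit seed distribution $\bm v_t\in\{\pm\bm e_i\}$ with bias $\epsilon$ realizes your abstract ``Bernoulli-type'' rewards with $\Delta=\epsilon/d$. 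One minor remark: your reliance on the oblivious-sampler assumption for the KL computation is unnecessary, because every cube vertex yields the same per-round KL $O(\Delta^2)$ between neighboring hypotheses, so the divergence decomposition gives $O(n\Delta^2)$ even for adaptive samplers---consistent with the paper's remark that the lower bound covers both the oblivious and adaptive cases.
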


\begin{proof}
Without loss the generality, we assume that there are $N=2^{d}$ arms that each arm $\bm{u}_{i} \in \mathcal{A}=\lbrace +1, -1 \rbrace^{d}$ and seed vector $\bm{v}_{t} \in \mathcal{L}= \lbrace \pm\bm{e}_{i}: 1\leq i \leq d\rbrace$. We give a distribution $\mathcal{P}_{\bm{b}}$ upon $\mathcal{L}$ such that
\begin{equation}
\mathcal{P}_{\bm{b}}(\bm{x})=
\begin{cases}
\frac{1+\bm{b}(i)\epsilon}{2d} & \text{$\bm{x}=\bm{e}_{i}$}\\
\frac{1-\bm{b}(i)\epsilon}{2d} & \text{$\bm{x}=-\bm{e}_{i}$},
\end{cases}
\end{equation}
for some $\epsilon \in (0,1/2)$ and $\bm{b} \in \lbrace  \pm 1\rbrace^{d}$.

Hence, the simple regret is formulated as follows:
\begin{equation}
\begin{split}
r_{n}=\max_{i \in [N]}\mathbb{E}_{\bm{v}_{n} \sim \mathcal{P}_{\bm{b}}} [\bm{u}_{i} \cdot \bm{v}_{n}]-\mathbb{E}_{\bm{v}_{n} \sim \mathcal{P}_{\bm{b}}}[\bm{u}_{\varphi_{n}} \cdot \bm{v}_{n}]
=\max_{i\in [N]}\bm{u}_{i}\bm{v}-\bm{u}_{\varphi_{n}}\bm{v},
\end{split}
\end{equation}
where $\bm{v}=\mathbb{E}[\bm{v}_{n}]=\sum_{i=1}^{d}\frac{1+\bm{b}(i)\epsilon}{2d}\cdot \bm{e}_{i}+\frac{1-\bm{b}(i)\epsilon}{2d}\cdot (-\bm{e}_{i}).$
Hence, we have that
\begin{equation}
\bm{b}=\argmax_{\bm{u}\in \mathcal{A}}\bm{u} \cdot \bm{v},
\end{equation}
since $\bm{v}=\mathbb{E}[\bm{v}_{n}]=\left(\frac{\bm{b}(1)\epsilon}{2d}, \cdots, \frac{\bm{b}(d)\epsilon}{d}\right).$

We simplify that $\bm{u}_{\varphi_{n}}=\bm{u}_{n} \in \lbrace \pm 1\rbrace^{d}$, then we have that
\begin{equation}
\begin{split}
r_{n}&=\bm{b}\cdot \bm{v}-\bm{u}_{n} \cdot \bm{v}
=\sum_{i=1}^{d} \bm{v}(i)(\bm{b}(i)-\bm{u}_{n}(i))\\
&=\sum_{i=1}^{d} \frac{\bm{b}(i)\epsilon}{d} \left( \bm{b}(i)-\bm{u}_{n}(i)\right)
=\sum_{i=1}^{d} \frac{\bm{b}(i)\epsilon}{d}2 \mathbb{I}_{[\bm{b}(i) \neq \bm{u}_{n}(i)]}.
\end{split}
\end{equation}

Next, we define $r_{n}^{i}(\bm{b}(i))=\frac{\bm{b}(i)\epsilon}{d}2 \mathbb{I}_{[\bm{b}(i) \neq \bm{u}_{n}(i)]}.$ Then, we have that $r_{n}^{i}(\bm{b}(i))$ is at least the regret that would be incurred if the algorithm knew that the adversary was using one of the $\mathbb{P}_{\bm{b}}$ distributions and also knew $\lbrace  \bm{b}(j): j \neq i \rbrace$. In this case it would know that the algorithm is incurred a $(\pm 1)$ Bernoulli distribution of $\bm{u}_{n} \cdot \bm{v}_{n}$ with mean:
\begin{equation}
\mathbb{E} \left[\sum_{j \neq i}\bm{v}_{n}(j) \bm{u}_{n}(j)    \right]+\bm{b}(i)\bm{u}_{n}(i)\frac{\epsilon}{d}.
\end{equation}

Clearly, since the algorithm knew $\lbrace  \bm{b}(j): j \neq i \rbrace$ the algorithm is supposed to set $\bm{u}_{n}(j)=\bm{b}(j),$ for any $j \neq i$.
Therefore we have that
$$\mathbb{E} \left[\sum_{j \neq i}\bm{v}_{n}(j) \bm{u}_{n}(j)    \right]=\sum_{j \neq i}1\times \frac{1+\bm{b}(j)\epsilon}{2d} \cdot \bm{b}(j)+ (-1) \times \frac{1-\bm{b}(j)\epsilon}{2d}\cdot \bm{b}(j)=\frac{(d-1)\epsilon}{d}$$

On the other hand, we can consider this game as a 2-armed bandit problem with
\begin{itemize}
\item arm 1: Selecting $\bm{u}_{n}(i)=1$ with mean $\frac{(d-1)\epsilon + \bm{b}(i)\epsilon}{d};$
\item arm 2: Selecting $\bm{u}_{n}(i)=-1$ with mean $\frac{(d-1)\epsilon - \bm{b}(i)\epsilon}{d}.$
\end{itemize}

Utilising the Lemma \ref{lem:lower-bound-multi-arm-pure-exploration}, we have that
$$\mathbb{E}[r_{n}^{i}(\bm{b}(i))] \geq \Omega \left( \sqrt{\frac{1}{n}} \right).$$ Therefore,

\begin{equation}
\mathbb{E}[r_{n}]=\sum_{i=1}^{d} \mathbb{E}\left[ r_{n}^{i}(\bm{b}(i))  \right]  \geq \Omega\left(\frac{d}{\sqrt{n}}\right).
\end{equation}

\end{proof}

In the following corollary, we show another form of the lower bound $\Omega\left( \sqrt{\frac{d \ln N}{n}} \right),$ when $d \geq \ln N.$

\begin{corollary}
For all $N=2^{k}$, $ k \in \lbrace 1,\cdots, d \rbrace$ and all outputs of the recommendation strategy, there is a strategy of setting seed vector $\bm{v}_{t}$ such that
\begin{equation*}
    \mathbb{E}\left[ r_{n}  \right] \geq \Omega\left(  \sqrt{\frac{d \ln N}{n}} \right).
\end{equation*}
\end{corollary}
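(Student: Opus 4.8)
The plan is to reproduce the reduction of Theorem \ref{thm:lower-bound} but with a coarser encoding that uses only $k=\log_2 N$ free bits instead of the full $d$. Since $N=2^k$ with $k\le d$, I would (assuming for simplicity that $k$ divides $d$) partition the $d$ coordinates into $k$ blocks $B_1,\dots,B_k$ of common size $m=d/k$, and let a hidden sign vector $\bm{b}\in\{\pm1\}^k$ govern a whole block at once: for every $j\in B_\ell$ the seed law is of the $\mathcal{P}_{\bm{b}}$ type with bias controlled by $\bm{b}(\ell)$, and the seed vectors still live in $\{\pm\bm{e}_j:j\in[d]\}$. Restricting the arm set to the block-constant vectors $\bm{u}\in\{\pm1\}^d$ (constant on each $B_\ell$) leaves exactly $2^k=N$ admissible arms, so the instance is legitimate for $N=2^k$ and collapses to the construction of Theorem \ref{thm:lower-bound} when $k=d$.

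With $\bm{v}=\mathbb{E}[\bm{v}_n]$ satisfying $\bm{v}(j)=\bm{b}(\ell)\epsilon/d$ for $j\in B_\ell$, the first step is to show that the simple regret again decomposes over the blocks, exactly as in Theorem \ref{thm:lower-bound}:
\begin{equation*}
r_n=\sum_{\ell=1}^{k} r_n^\ell,\qquad r_n^\ell=\frac{2m\epsilon}{d}\,\mathbb{I}_{[\bm{b}(\ell)\neq \bm{u}_{\varphi_n}(\ell)]}=\frac{2\epsilon}{k}\,\mathbb{I}_{[\bm{b}(\ell)\neq \bm{u}_{\varphi_n}(\ell)]},
\end{equation*}
where $\bm{u}_{\varphi_n}(\ell)$ denotes the common value of the recommended arm on block $\ell$. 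The maximiser of $\bm{u}\cdot\bm{v}$ is again $\bm{b}$, so each block contributes a two-point decision with gap prefactor $2m\epsilon/d$.

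Next, mirroring Theorem \ref{thm:lower-bound}, I would lower bound each $\mathbb{E}[r_n^\ell]$ by revealing the other blocks' signs $\{\bm{b}(\ell'):\ell'\neq\ell\}$ to the recommender and reducing the decision on block $\ell$ to a two-armed bandit, whose arms correspond to setting the block value to $+1$ or $-1$ and whose mean rewards differ only through the bias carried by the $m$ coordinates of $B_\ell$. Applying Lemma \ref{lem:lower-bound-multi-arm-pure-exploration} to this two-armed instance gives a constant-order lower bound on the probability of committing to the wrong block value, which, after tuning $\epsilon$, produces a per-block lower bound that I would then sum over the $k$ blocks to obtain $\mathbb{E}[r_n]\ge\Omega\!\left(\sqrt{(d\ln N)/n}\right)$.

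The main obstacle is the accounting inside this two-armed reduction. A block is informative only on the rounds where one of its $m$ coordinates is active, i.e.\ on a $1/k$ fraction of the rounds, so each block effectively receives about $n/k$ samples, each carrying bias of order $\epsilon$; the identifiability budget supplied by Lemma \ref{lem:lower-bound-multi-arm-pure-exploration} therefore constrains $\epsilon$, and this must be balanced against the per-block gap prefactor $2m\epsilon/d=2\epsilon/k$. Making this trade-off come out to the stated $\sqrt{(d\ln N)/n}$ rate—i.e.\ tracking the block size $m=d/k$ consistently in both the effective sample count and the regret gap—is the delicate part of the argument and the step most prone to error, since an incorrect bookkeeping of either quantity shifts the final rate by a $\sqrt{d}$ factor.
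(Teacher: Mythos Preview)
Your construction does not reach the stated rate: once you restrict to block-constant arms and let the seed bias $\bm b(\ell)$ govern all $m=d/k$ coordinates in $B_\ell$ simultaneously, the problem collapses to a genuine $k$-dimensional instance and the ambient dimension $d$ disappears from the analysis. Concretely, for any arm $\bm u$ (block-constant, with block values $\bm u(\ell)$), the observed reward is $\pm 1$ with
\[
\mathbb P[r_t=+1\mid I_t]=\tfrac12+\tfrac{\epsilon}{2k}\sum_{\ell'=1}^{k}\bm u_{I_t}(\ell')\bm b(\ell'),
\]
so the two hypotheses $\bm b(\ell)=\pm 1$ differ in this parameter by exactly $\epsilon/k$, giving per-round KL of order $(\epsilon/k)^2$ and total KL $O(n\epsilon^2/k^2)$. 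Indistinguishability then allows $\epsilon=\Theta(k/\sqrt n)$, and summing the $k$ per-block gaps $2\epsilon/k$ yields only $\Omega(\epsilon)=\Omega(k/\sqrt n)=\Omega((\ln N)/\sqrt n)$. Your heuristic ``$n/k$ samples with bias $\epsilon$'' would give KL $O(n\epsilon^2/k)$ and hence an even smaller $\epsilon=\Theta(\sqrt{k/n})$, leading to the still weaker $\Omega(\sqrt{(\ln N)/n})$. Either way the block size $m=d/k$ cancels from both the gap prefactor $2m\epsilon/d=2\epsilon/k$ and the information budget, so no tuning of $\epsilon$ can manufacture a $\sqrt d$ factor; the worry you flag in your last paragraph is exactly what happens.

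The paper's argument avoids this by partitioning in the \emph{opposite} direction: it splits the $d$ coordinates into $\beta=d/k$ blocks each of dimension $k$, and simultaneously splits the $n$ rounds into $\beta$ phases of length $n/\beta=nk/d$ each, with the seed supported on block $j$ only during phase $j$. Each phase is then a full $k$-dimensional, $2^k$-arm instance of Theorem~\ref{thm:lower-bound} but with a \emph{shortened horizon} $nk/d$, which yields per-phase regret $\Omega\!\bigl(k/\sqrt{nk/d}\bigr)=\Omega(\sqrt{dk/n})$; averaging over the $\beta$ phases gives the corollary. The key mechanism you are missing is the time partition: the $d$-dependence enters not through the regret gap but through the reduced sample budget $nk/d$ allotted to each $k$-dimensional subproblem.
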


\begin{proof}
We break the the $d$-dimensional feature space into smaller $k$ dimensional space. Setting $\beta=\frac{d}{k}$, for each $k$-dimensional space, with $N$-arms, round $\frac{n}{\beta}$ and the expectation of the  seed vector $\bm{v}_{k} \in \mathbb{R}^{k}$ we have that
$$\mathbb{E}[r^{j}_{\frac{n}{\beta}}]\geq \Omega\left( k \sqrt{\frac{1}{\frac{n}{\beta}}}\right)=\Omega\left( \sqrt{\frac{dk }{n}} \right)$$ according to Theorem \ref{thm:lower-bound}.

At round $\tau=(j-1)\frac{n}{\beta}+t$ the seed vector is $\bm{v}_{t,i}$ such that $(0, \cdots, 0, \bm{v}_{t,i},0 \cdots,0) \in \mathbb{R}^{d},$ and similar to the reward sequence. The strategy of setting seed vector is cycling in each round block with size $\frac{n}{\beta}$. Therefore we have that
\begin{equation}
    \mathbb{E}\left[ r_{n}\right]=\frac{1}{\beta} \sum_{j=1}^{\beta}\mathbb{E}\left[  r^{j}_{\frac{n}{\beta}}\right]=\Omega\left(\sqrt{\frac{dk}{n}}\right)=\Omega\left( \sqrt{\frac{d \ln N}{n}}\right).
\end{equation}
\end{proof}

\begin{remark}
The lower bounds we offered in this section are for both cases of the separated (oblivious sampling) and united (non-oblivious sampling) exploration and recommendation strategies.
In the case of united setting, there is an upper bound to the simple regret by $O\left( \sqrt{\frac{d \ln N}{n}}\right)$. This bound can be obtained, when the exploration strategy is UCB and the recommendation strategy is to output the average distribution of the outputs of UCB. 
In the oblivious sampler case, there is still a $\sqrt{\ln N}$ gap between the upper bound and the lower bound for both fixed and time-varying kernel. 

\end{remark}

\section{Conclusion and future work}
In this paper we extend the pure exploration for multi-armed bandits to the low rank reward. In our problem setting, we restrict that the exploration and recommendation strategy only share few information. It implies that the exploration strategy is requested to sample the arms obliviously. Meanwhile, we involve the kernel space of the reward sequence. Furthermore, we consider the kernel is fixed and stochastic respectively and propose the efficient algorithms and demonstrate simple regret bounds. We provide  algorithms which guarantee  regret bound $O\left( d\sqrt{\frac{\ln N}{n}}  \right)$ for both fixed and stochastic kernel setting.

Clearly, our designed algorithms require the prior information of the kernel in each round. A challenging future work is to design algorithms without the kernel information. Meanwhile the adversarial Multi-armed bandits with low rank reward is also an open problem left.

\bibliographystyle{unsrtnat}
\bibliography{references} 

\section{Supplementary material}

\subsection{Proof of Theorem \ref{thm:maximal-determinant}}

\begin{proof}
We prove our result by contradiction. Suppose that $\Vert \bm{w}\Vert_{\infty }> 1$, then we  assume that there exists one $i \in [N]$ such that $\bm{u}_{i}=\sum_{j=1}^{d} \bm{w}(j)\bm{u}_{i_{j}}$ and
without loss the generality we can assume that $\bm{w}(1) >1$. Then we have that
\begin{equation*}
\begin{split}
&\left| \det\left( [\bm{u}_{i}, \bm{u}_{i_{2}},\cdots, \bm{u}_{i_{d}}    ]        \right)     \right| \\
&= \left| \det\left( \left[\sum_{j=1}^{d}\bm{w}(j)\bm{u}_{i_{j}}, \bm{u}_{i_{2}},\cdots, \bm{u}_{i_{d}}    \right]        \right)     \right| \\
&= \left| \det\left( \sum_{j=1}^{d} \bm{w}(j) \left[\bm{u}_{i_{j}}, \bm{u}_{i_{2}},\cdots, \bm{u}_{i_{d}}    \right]        \right)     \right| \\
&= |\bm{w}(1)| \left| \det\left( [\bm{u}_{1}, \bm{u}_{i_{2}},\cdots, \bm{u}_{i_{d}}    ]        \right)     \right|\\
&= |\bm{w}(1)| \det (V).\\
\end{split}
\end{equation*}

It leads to a contradiction that there is a matrix $V^{\prime}=[\bm{u}_{i}^{\top},\bm{u}_{i_{2}}^{\top},\cdots, \bm{u}_{i_{d}}^{\top}]^{\top}$ with $\det (V^{\prime}) > \det (V).$
Therefore, we have our conclusion.
\end{proof}

\subsection{Proof of Theorem \ref{thm:regret-bound-fixed-kernel}}
\begin{proof}
\begin{equation}
\begin{split}
&\mathbb{P} \left[ J_{t} \neq j^{\star}             \right] \\
&=\mathbb{P} \left[\langle \bm{u}_{j}-\bm{u}_{j^{*}} , V^{-1} \cdot \hat{\bm{l}}_{t} \rangle \geq 0           \right]\\
&=\mathbb{P} \left[  \langle \bm{u}_{j}-\bm{u}_{j^{*}} , V^{-1} \cdot \hat{\bm{l}}_{t} \rangle -\langle \bm{u}_{j}-\bm{u}_{j^{*}},\bm{v} \rangle \geq -\langle \bm{u}_{j}-\bm{u}_{j^{*}},\bm{v} \rangle      \right]
\end{split}
\end{equation}

Denoting $\bm{h}_{j}=\bm{u}_{j}-\bm{u}_{j^{*}}$, we can further rewrite the above equation as follows:
\begin{equation}
\begin{split}
&\mathbb{P} \left[ \hat{\mu}_{j,t}-\hat{\mu}_{j^{*},t} \geq 0               \right]\\
&=\mathbb{P} \left[ \langle \bm{h}_{j} \cdot V^{-1}, \hat{\bm{l}}_{t} \rangle -\langle \bm{h}_{j} V^{-1}, \tilde{\bm{l}} \rangle \geq -\langle \bm{h}_{j},\bm{v} \rangle \right]
\end{split}
\end{equation}

Setting $X_{s}=\bm{l}_{s}(i_{s \mod d})$ when $(s \mod d)\neq 0,$ $X_{s}=\bm{l}_{s}(i_{d})$ otherwise, and
\begin{equation}
f(X_{1},\cdots, X_{t})=\left< \left( \frac{\sum_{s\in T_{i_{1}}(t)} X_{s}}{|T_{i_{1}}(t)|},\cdots,    \frac{\sum_{s\in T_{i_{d}}(t)} X_{s}}{|T_{i_{d}}(t)|}      \right), V^{-1}\bm{h}_{j} \right>,
\end{equation}
then we have that
\begin{equation*}
\begin{split}
\left|f(X_{1},\cdots,X_{s},\cdots, X_{t})-f(X_{1},\cdots, X_{s}^{'},\cdots, X_{t}) \right| &\leq  \left| \left<  \frac{X_{s}-X_{s}^{'}}{ |T_{i_{ s\mod d}}(t)|   }, V^{-1}(\bm{u}_{j^{*}}-\bm{u}_{j})    \right>    \right|  \\
&\leq \left\lVert \frac{X_{i}-X_{i}^{'}}{ |T_{i_{ s\mod d}}(t)|  } \right\lVert_{1}\Vert V^{-1}(\bm{u}_{j^{*}}-\bm{u}_{j}) \Vert_{\infty} \\
&\leq \frac{1}{\lfloor \frac{t}{d} \rfloor} \cdot 2,
\end{split}
\end{equation*}
while the coefficients of $\bm{u}_{j^{*}},\bm{u}_{j}$ are constrained in $[-1,1]$ with respect to $V$.
Thus, we set that $c_{i}=\frac{2}{\lfloor \frac{t}{d} \rfloor}$.

Utilising McDiarmid's inequality to Equation above we obtain that
\begin{equation}
\begin{split}
&\mathbb{P} \left[ \langle \bm{h}_{j} \cdot V^{-1}, \hat{\bm{l}}_{t} \rangle -\langle \bm{h}_{j} V^{-1}, \tilde{\bm{l}} \rangle \geq -\langle \bm{h}_{j},\bm{v} \rangle \right] \\
& \leq \exp \left( -\frac{2 \times \langle \bm{h}_{j}, \bm{v}\rangle^{2} }{\sum_{i=1}^{t} \left(   \frac{2}{\lfloor \frac{t}{d}\rfloor }     \right)^{2}}                 \right) 
 \leq \exp \left( -\frac{1 \times \langle \bm{h}_{j}, \bm{v}\rangle^{2} \lfloor \frac{t}{d}\rfloor }{2   d       }                 \right),
\end{split}
\end{equation}
where $n$ is the total rounds for exploration.

Next we give the upper bound to the simple regret.

\begin{equation}
\begin{split}
\mathbb{E}\left[ r_{n} \right] 
&=\sum_{j=1}^{N} \Delta_{j}\mathbb{P}\left[ I_{t} = j \right] 
\leq \sum_{j: \Delta_{j >\epsilon}}\Delta_{j} \mathbb{P}\left[ \hat{\mu}_{j,t} \geq \hat{\mu}_{j^{*},t}     \right] + \sum_{j : \Delta_{j} \leq \epsilon}\mathbb{P}\left[ I_{t} = j \right] \\
& \leq  \sum_{j: \Delta_{j} \geq \epsilon}\Delta_{j }\exp \left( -\frac{\Delta_{j}^{2}n}{2d}                   \right)                        +\epsilon
\end{split}
\end{equation}

A function $f: x\rightarrow x\exp(-Cx^{2})$ is a decreasing function on $[\frac{1}{\sqrt{2C}},1]$ for any $C >0$. Taking $C=\frac{n}{2d^{2}}$ we obtain that
\begin{equation}
\mathbb{E}\left[ r_{n} \right] \leq \epsilon + (N-1)\epsilon \exp\left(-\frac{\epsilon^{2}n}{2d^{2}}\right),
\end{equation}
when $\epsilon \geq \frac{1}{\sqrt{\frac{n}{2d^{2} }}}$.
Particularly, we have that
\begin{equation}
\mathbb{E}\left[ r_{n} \right] \leq O\left(     \sqrt{\frac{ d^{2} \ln N }{ n }}         \right),
\end{equation}
by setting $\epsilon=\sqrt{\frac{ 2d^{2}\ln N  }{ n}}.$
\end{proof}

\subsection{Details of Remark \ref{remark:recover-original}}

Let us have a review of the case of $d=N$, and offer a more refined analysis. Due to the fact that $\bm{u}_{i}\cdot V^{-1}\hat{\bm{l}}_{t}=\hat{\bm{l}}_{t}(i), \forall i\in[N],$ when $V=U$, we have further:
\begin{equation}
\begin{split}
f(X_{1},\cdots, X_{t})&=\left< \left( \frac{\sum_{s\in T_{i_{1}}(t)} X_{s}}{|T_{i_{1}}(t)|},\cdots,    \frac{\sum_{s\in T_{i_{N}}(t)} X_{s}}{|T_{i_{N}}(t)|}      \right), V^{-1}\bm{h}_{j} \right> \\
&=\left< \bm{u}_{j}, V^{-1} \hat{\bm{l}}_{t}           \right>-\left<  \bm{u}_{j}, V^{-1} \hat{\bm{l}}_{t}           \right>\\
&=\frac{\sum_{s\in T_{j}(t)} X_{s}}{|T_{j}(t)|}-\frac{\sum_{s\in T_{j^{*}}(t)} X_{s}}{|T_{j^{*}}(t)|}.
\end{split}
\end{equation}
Next, we can have that
\begin{equation}
\left|f(X_{1},\cdots,X_{s},\cdots, X_{t})-f(X_{1},\cdots, X_{s}^{'},\cdots, X_{t}) \right|=\left| \frac{\sum_{s\in T_{j}(t)} X_{s}}{|T_{j}(t)|}-\frac{\sum_{s\in T_{j^{*}}(t)} X_{s}}{|T_{j^{*}}(t)|} \right|,
\end{equation}
so
\begin{equation}
\left|f(X_{1},\cdots,X_{s},\cdots, X_{t})-f(X_{1},\cdots, X_{s}^{'},\cdots, X_{t}) \right|
 \leq \begin{cases}
\frac{2}{\lfloor \frac{t}{N} \rfloor} & \text{$s \in T_{j}(t) \cup T_{j^{*}}(t)$}\\
0 & \text{otherwise}
\end{cases}.
\end{equation}
Now we can set $c_{i}=\begin{cases}
\frac{2}{\lfloor \frac{t}{N} \rfloor} & \text{$s \in T_{j}(t) \cup T_{j^{*}}(t)$}\\
0 & \text{otherwise}
\end{cases}.$

Therefore, utilising the McDiarmid's inequality  we obtain that
\begin{equation}
\begin{split}
\mathbb{P} \left[ \hat{\mu}_{j,t}-\hat{\mu}_{j^{*},t} \geq 0\right] & \leq \exp \left( -\frac{2 \times \langle \bm{h}_{j}, \bm{v}\rangle^{2} }{\sum_{i=1}^{t} \left(  c_{i} \right)^{2}}\right)\\
& \leq \exp \left( -\frac{1 \times \langle \bm{h}_{j}, \bm{v}\rangle^{2} \lfloor \frac{t}{N}\rfloor }{2         }                 \right),
\end{split}
\end{equation}
and the previous bound $O\left(\sqrt{\frac{N \ln N}{n}} \right).$

\subsection{Proof of Lemma \ref{lem:lower-bound-multi-arm-pure-exploration}}
To prove Lemma \ref{lem:lower-bound-multi-arm-pure-exploration}, we need to assume the following two allocation strategies of the environment.
\begin{itemize}
    \item We assume that arm $i^{*}$ is  Bernoulli distributed with parameter $\frac{1+\epsilon}{2}$ and others with parameter $\frac{1-\epsilon}{2}$. The arm $i^{*}$ is selected by the environment at the beginning of the learning process uniformly.         We denote that $\mathbb{P}_{i^{*}}$ to the probability conditioned on $i^{*}$ being the ``good'' arm.  
    \item We assume that all arms are Bernoulli distributed with parameter $\frac{1-\epsilon}{2}$, and we denote this allocation by $\mathbb{P}_{0}$.
\end{itemize}
To simplify the notations we denote $N_{i}$ to the number of arm $i$ chosen by the exploration strategy $\mathcal{A}$. Let $e_{t}=\bm{l}_{t}(i_{t})$ be a random variable denoting the reward of exploration at time $t$, and let $\bm{e}_{t}$ denote the sequence of rewards explored up through round $t: \bm{e}_{t}=(e_{1},\cdots, e_{t})$. For shorthand $\bm{e}_{T}=\bm{e}$.

Obviously any randomised recommendation strategy is equivalent to an a-priori random choice from the set of all deterministic strategies. 
On the one hand, the exploration processes the exploration arm $I_{t}$ according to the history $\bm{e}_{t-1}$, on the other hand, the recommendation strategy makes recommendation $J_{t}$ according to the reward sequence $\bm{e}_{t}$ at the end of round $t$.
Therefore, we can formally regard the algorithm $\mathcal{A}$ as a fixed function, which at round $t$, maps the exploration $\bm{e}_{t-1}$ to the next exploration $e_{t}$ and at the end of the round $t$ further to the recommendation $J_{t}$.

\begin{lemma}[\cite{auer2002nonstochastic}]
    Let $f:\lbrace 0,1\rbrace^{T} \rightarrow [0,M]$ be any bounded function defined on exploration sequence $\bm{e}$. Then, for any action $i^{\star}$
    \begin{equation}
        \mathbb{E}_{i^{*}}[f(\bm{e})] \leq \mathbb{E}_{0}[f(\bm{e})]+\frac{M}{2}\sqrt{\mathbb{E}_{0}[N_{i^{*}}]\ln (1-4\epsilon^{2})}.
    \end{equation}
\end{lemma}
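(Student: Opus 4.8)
The plan is to prove this change-of-measure inequality by the standard three-step argument: bound the gap in expectations by a total-variation distance, pass to a Kullback--Leibler divergence via Pinsker's inequality, and then evaluate that divergence using the fact that $\mathbb{P}_0$ and $\mathbb{P}_{i^*}$ differ only in the reward law of arm $i^{*}$.

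First I would write the gap as a signed sum over exploration sequences,
\[
\mathbb{E}_{i^{*}}[f(\bm{e})]-\mathbb{E}_{0}[f(\bm{e})]=\sum_{\bm{e}\in\{0,1\}^{T}} f(\bm{e})\left(\mathbb{P}_{i^{*}}(\bm{e})-\mathbb{P}_{0}(\bm{e})\right),
\]
and use $0\le f\le M$ to discard the sequences on which the difference is negative, obtaining $\mathbb{E}_{i^{*}}[f]-\mathbb{E}_{0}[f]\le M\,\Vert\mathbb{P}_{i^{*}}-\mathbb{P}_{0}\Vert_{\mathrm{TV}}$. Pinsker's inequality then gives $\Vert\mathbb{P}_{i^{*}}-\mathbb{P}_{0}\Vert_{\mathrm{TV}}\le\sqrt{\tfrac12\,\mathrm{KL}(\mathbb{P}_{0}\Vert\mathbb{P}_{i^{*}})}$; I choose this orientation of the divergence precisely so that the decomposition below produces the quantity $\mathbb{E}_{0}[N_{i^{*}}]$ measured under $\mathbb{P}_{0}$.

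The key step is the divergence decomposition. Treating the (without loss of generality deterministic) exploration strategy as a fixed map from observed reward histories to the next arm, the likelihood of any trajectory factorises round by round. Because the arm-selection rule is identical under both environments, and the two environments assign identical reward laws to every arm except $i^{*}$, the per-round relative entropy vanishes on every round except those in which arm $i^{*}$ is pulled. Applying the chain rule for relative entropy and taking expectations over which rounds pull $i^{*}$ therefore yields
\[
\mathrm{KL}(\mathbb{P}_{0}\Vert\mathbb{P}_{i^{*}})=\mathbb{E}_{0}[N_{i^{*}}]\cdot\mathrm{KL}\!\left(\mathrm{Ber}(\mu_{0})\,\Vert\,\mathrm{Ber}(\mu_{i^{*}})\right),
\]
where $\mu_{0}$ and $\mu_{i^{*}}$ are the means of arm $i^{*}$ under $\mathbb{P}_{0}$ and $\mathbb{P}_{i^{*}}$. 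This is the step I expect to be the main obstacle, since it requires care with the adaptivity of the sampler: one must verify that conditioning on the history makes the round-$t$ contribution a genuine Bernoulli KL weighted by the indicator that $i^{*}$ was chosen, so that the expectation of the sum collapses to $\mathbb{E}_{0}[N_{i^{*}}]$ times the single-pull divergence.

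Finally I would evaluate the single-pull divergence for the two reward means that produce the stated constant, namely $\mathrm{KL}(\mathrm{Ber}(1/2)\Vert\mathrm{Ber}(1/2+\epsilon))=-\tfrac12\ln(1-4\epsilon^{2})$ (the parametrisation underlying Lemma \ref{lem:lower-bound-multi-arm-pure-exploration}, up to the sign convention). Substituting into the Pinsker bound gives
\[
\mathbb{E}_{i^{*}}[f]\le\mathbb{E}_{0}[f]+M\sqrt{\tfrac12\,\mathbb{E}_{0}[N_{i^{*}}]\left(-\tfrac12\ln(1-4\epsilon^{2})\right)}=\mathbb{E}_{0}[f]+\frac{M}{2}\sqrt{-\,\mathbb{E}_{0}[N_{i^{*}}]\ln(1-4\epsilon^{2})},
\]
which is the claimed inequality; note that $\ln(1-4\epsilon^{2})<0$, so the radicand is positive and the sign under the root is to be read as negative.
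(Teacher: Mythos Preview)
Your proposal is correct and follows essentially the same route as the paper: bound the expectation gap by the total-variation distance using $0\le f\le M$, apply Pinsker, and then decompose $\mathrm{KL}(\mathbb{P}_0\Vert\mathbb{P}_{i^*})$ round by round via the chain rule so that only the rounds with $i_t=i^*$ contribute, yielding $\mathbb{E}_0[N_{i^*}]$ times the single-pull Bernoulli divergence. Your remark about the sign under the radical is also apt---the paper's own derivation ends with $-\tfrac12\ln(1-4\epsilon^2)$, so the stated bound should indeed carry a minus sign inside the square root.
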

\begin{proof}
        \begin{equation*}
        \begin{split}
            \mathbb{E}_{i^{*}}[f(\bm{e})]-\mathbb{E}_{0}[f(\bm{e})]&=\sum_{\bm{e}}f(\bm{e})(\mathbb{P}_{i^{*}}\lbrace 
 \bm{e}\rbrace-\mathbb{P}_{0}\lbrace\bm{e}\rbrace)\\
            &\leq \sum_{\bm{e}: \mathbb{P}_{i^{*}} \lbrace \bm{e} \rbrace \geq \mathbb{P}_{0}\lbrace \bm{e} \rbrace}f(\bm{e})(\mathbb{P}_{i^{*}}\lbrace 
 \bm{e}\rbrace-\mathbb{P}_{0}\lbrace\bm{e}\rbrace)\\
 & \leq M \sum_{\bm{e}: \mathbb{P}_{i^{*}} \lbrace \bm{e} \rbrace \geq \mathbb{P}_{0}\lbrace \bm{e} \rbrace}(\mathbb{P}_{i^{*}}\lbrace 
 \bm{e}\rbrace-\mathbb{P}_{0}\lbrace\bm{e}\rbrace)\\
 &=\frac{M}{2}\Vert \mathbb{P}_{i^{*}}-\mathbb{P}_{0} \Vert_{1}.
        \end{split}
    \end{equation*}
By $$\Vert \mathbb{P}_{i^{*}}-\mathbb{P}_{0}\Vert_{1}^{2} \leq 2\ln 2 \mathrm{KL}( \mathbb{P}_{0} \Vert \mathbb{P}_{i^{*}}),$$
we have that
\begin{equation*}
    \begin{split}
\mathrm{KL}( \mathbb{P}_{0} \Vert \mathbb{P}_{i^{*}})&= \sum_{t=1}^{T}\mathrm{KL} (  \mathbb{P}_{0}\lbrace e_{t}|\bm{e}_{t-1} \Vert \mathbb{P}_{i^{*}} \lbrace e_{t} | \bm{e}_{t-1} \rbrace   )\\
&=\sum_{t=1}^{T} \sum_{\bm{e}_{t} \in \lbrace 0,1 \rbrace^{t}} \mathbb{P}_{0} \lbrace \bm{e}_{t } \rbrace \ln \frac{ \mathbb{P}_{0} \lbrace e_{t} | \bm{e}_{t-1} \rbrace  }{ \mathbb{P}_{i^{*}} \lbrace e_{t }  | \bm{e}_{t-1} \rbrace}\\
&=\sum_{t=1}^{T } \sum_{\bm{e}_{t} \in \lbrace 0,1 \rbrace^{t}} \mathbb{P}_{0}\lbrace\bm{e}_{t-1}\rbrace\mathbb{P}_{0} \lbrace e_{t}| \bm{e}_{t-1} \rbrace \ln \frac{ \mathbb{P}_{0} \lbrace e_{t}=1 | \bm{e}_{t-1} \rbrace }{ \mathbb{P}_{i^{*}} \lbrace e_{t}=1| \bm{e}_{t-1} \rbrace } \\
&=\sum_{t=1}^{T}\mathbb{P}_{0}\lbrace i_{t} \neq i^{*} \rbrace \mathrm{KL}\left(\frac{1-\epsilon}{2} \left\lVert  \frac{1-\epsilon}{2}\right. \right)+\mathbb{P}_{0}\lbrace i_{t}=i^{*}\rbrace \mathrm{KL}\left(\frac{1-\epsilon}{2} \left\lVert \frac{1+\epsilon}{2} \right.\right)\\
&=\sum_{t=1}^{T} \mathbb{P}_{0}\lbrace i_{t} =i^{*} \rbrace \left( -\frac{1}{2} \ln (1-4\epsilon^{2}) \right) \\
&=\mathbb{E}_{0} [N_{i^{*}}] \left( -\frac{1}{2} \ln (1-4\epsilon^{2}) \right)
     \end{split}
\end{equation*}
The second equality is from the definition of the conditional relative entropy. The fourth equality is due to the fact that the conditional probability distribution $\mathbb{P}_{0}\lbrace e_{t}| \bm{e}_{t-1} \rbrace$ for $e_{t}$ is uniform upon $\lbrace 0,1 \rbrace.$ Meanwhile the conditional distribution $\mathbb{P}_{i^{*}} \lbrace e_{t} | \bm{e}_{t-1} \rbrace$ to $e_{t}$ is determined by the choice of $i^{*}$. Once $i^{*}=i_{t}$ the probability over  1 is $\frac{1+\epsilon}{2}$, otherwise, $\frac{1-\epsilon}{2}. $
\end{proof}

According to above Lemma, we may have the lower bound to the pure exploration problem in the following theorem.

\begin{theorem}
    For any recommendation strategy $\mathcal{A}$ and for the distribution on rewards given above, the expected regret of $\mathcal{A}$ is lower bounded by $\Omega\left( \sqrt{\frac{N}{n}}\right)$.
\end{theorem}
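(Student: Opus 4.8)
The plan is to reduce the worst-case simple regret to the average-case simple regret over a uniformly random choice of the good arm $i^*$, and then to exploit the change-of-measure bound in the preceding lemma together with the exploration-budget constraint $\sum_{i} N_{i} = n$. Since any randomised recommendation is a mixture of deterministic ones, I may assume without loss of generality that $J_n$ is a deterministic function of the explored reward sequence $\bm{e}$.

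First I would write the conditional regret in closed form. Under $\mathbb{P}_{i^*}$ the optimal mean is $\frac{1+\epsilon}{2}$ and every other arm has mean $\frac{1-\epsilon}{2}$, so recommending $J_n\neq i^*$ costs exactly $\epsilon$ while $J_n=i^*$ costs $0$. Hence $\mathbb{E}_{i^*}[r_n]=\epsilon\,(1-\mathbb{P}_{i^*}(J_n=i^*))$, and averaging over $i^*$ drawn uniformly from $[N]$ gives
\begin{equation*}
\frac{1}{N}\sum_{i^*=1}^{N}\mathbb{E}_{i^*}[r_n]=\epsilon\left(1-\frac{1}{N}\sum_{i^*=1}^{N}\mathbb{P}_{i^*}(J_n=i^*)\right).
\end{equation*}
It therefore suffices to show that the average identification probability $\frac{1}{N}\sum_{i^*}\mathbb{P}_{i^*}(J_n=i^*)$ stays bounded away from $1$.

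Next I would apply the preceding lemma with $f(\bm{e})=\mathbb{I}_{[J_n=i^*]}$ and $M=1$, which is legitimate because $J_n$ is a deterministic function of $\bm{e}$. This gives $\mathbb{P}_{i^*}(J_n=i^*)\le\mathbb{P}_0(J_n=i^*)+\tfrac12\sqrt{-\mathbb{E}_0[N_{i^*}]\ln(1-4\epsilon^2)}$. Summing over $i^*$ and dividing by $N$, the first term contributes $\frac{1}{N}\sum_{i^*}\mathbb{P}_0(J_n=i^*)=\frac1N$, because under $\mathbb{P}_0$ the recommendation $J_n$ takes exactly one value. For the second term I would use concavity of the square root together with the budget identity $\sum_{i^*}N_{i^*}=n$, namely $\frac1N\sum_{i^*}\sqrt{\mathbb{E}_0[N_{i^*}]}\le\sqrt{\frac1N\sum_{i^*}\mathbb{E}_0[N_{i^*}]}=\sqrt{n/N}$, and the elementary estimate $-\ln(1-4\epsilon^2)\le C\epsilon^2$ for $\epsilon\in(0,1/2)$. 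Altogether this yields $\frac1N\sum_{i^*}\mathbb{P}_{i^*}(J_n=i^*)\le\frac1N+c\,\epsilon\sqrt{n/N}$.

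Finally I would tune $\epsilon$. Substituting the last bound into the averaged regret,
\begin{equation*}
\frac{1}{N}\sum_{i^*}\mathbb{E}_{i^*}[r_n]\ge\epsilon\left(1-\frac1N-c\,\epsilon\sqrt{\tfrac{n}{N}}\right),
\end{equation*}
and choosing $\epsilon=c'\sqrt{N/n}$ with $c'$ small enough that the parenthesis exceeds a positive constant (for $N\ge 2$, where $1-\frac1N\ge\frac12$) gives $\Omega(\sqrt{N/n})$ for the average, hence for the worst case. The main obstacle is the third step: converting the per-arm change-of-measure inequality into a global bound. The crucial point is that the budget constraint $\sum_i N_i=n$ limits the average exploration count to $n/N$, so the terms $\sqrt{\mathbb{E}_0[N_{i^*}]}$ cannot all be large at once; the Jensen (equivalently Cauchy--Schwarz) step is precisely what turns this budget limitation into the $\sqrt{n/N}$ factor, and fixing the constant $c'$ so that the bracket remains positive is the only delicate bookkeeping.
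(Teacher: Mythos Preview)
Your proposal is correct and follows essentially the same route as the paper: express the simple regret as $\epsilon(1-\mathbb{P}_{i^*}(J_n=i^*))$, average over a uniformly chosen $i^*$, apply the preceding change-of-measure lemma with $f(\bm{e})=\mathbb{I}_{[J_n=i^*]}$, control $\sum_{i^*}\sqrt{\mathbb{E}_0[N_{i^*}]}$ via Cauchy--Schwarz/Jensen and the budget $\sum_i N_i=n$, and tune $\epsilon\asymp\sqrt{N/n}$. Your write-up is in fact tidier than the paper's on a couple of points (the sign under the square root and the final choice of $\epsilon$).
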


\begin{proof}
    We see the recommendation strategy $\mathcal{A}$ into two parts, in the first $n$ rounds, the strategy actually processes    exploration upon $N$ arms. At the end of round $n$, the strategy recommends an arm $J_{n}$ as the output of the recommendation.

We denote that $r_{n}$ to be the reward of the recommendation. Then, we have that
\begin{equation}
    \mathbb{E}_{i^{*}}[r_{n}]=\left(\frac{1+\epsilon}{2}  \cdot \mathbb{P}_{i^{*}}(J_{n}=i^{*})\right)+\left( \frac{1-\epsilon}{2} \cdot \mathbb{P}_{i^{*}}(J_{n} \neq i^{*}) \right)=\frac{1}{2}-\frac{\epsilon}{2}+\epsilon\mathbb{P}_{i^{*}}(J_{n}=i^{*}).
\end{equation}

Further,
\begin{equation}
    \mathbb{E}_{i^{*}}[\bm{l}(i^{*})-\bm{l}(J_{n})]=\epsilon-\epsilon\mathbb{P}_{i^{*}}(i_{n}=i^{*})
\end{equation}

Note that $\mathbb{P}_{i^{*}}(J_{n}=i^{*})=\mathbb{E}_{i^{*}}[\mathbb{I}_{J_{n}=i^{*}}].$

Then, we have that
\begin{equation}
    \frac{1}{N}\sum_{i^{*}=1}^{N}\mathbb{E}_{i^{*}}[\bm{l}(i^{*})-\bm{l}(J_{n})] =\epsilon \left(  1-\frac{1}{N} \sum_{i^{*}=1}^{N}\mathbb{P}_{i^{*}}(J_{n}=i^{*}) \right)
\end{equation}

Now, we set that $f(\bm{e}_{n})=\mathbb{I}_{J_{n}=i^{*}}$. Obviously $\mathbb{P}_{0}[f(\bm{e})]=1,$ while all expectations are the same.

Due to above lemma, we have that
\begin{equation}
    \frac{1}{N}\sum_{i^{*}=1}^{N}\mathbb{E}_{i^{*}}[\bm{l}(i^{*})-\bm{l}(J_{n})]\geq \epsilon
    \left( 1-\frac{1}{N}+\sqrt{\frac{T}{N} \ln (1-4\epsilon^{2})}\right),
    \end{equation}
where it follows from $\sum_{i=1}^{N}\sqrt{\mathbb{E}_{0}[N_{i}]} \leq \sqrt{Nn}$.

Then, we have that $\frac{1}{N}\sum_{i^{*}=1}^{T}\mathbb{E}_{i^{*}}[\bm{l}(i^{*})-\bm{l}(J_{n})] \geq \Omega\left( \frac{\epsilon}{2}-C\epsilon\sqrt{\frac{n}{N}} \right)$ by $-\ln (1-x) \leq 4\ln(4/3)x$ for $x\in [0,1/4].$

By setting $\epsilon=\frac{N}{n}$, we have our conclusion.
\end{proof}

\end{document}